\documentclass[10pt,twocolumn,letterpaper]{article}

\usepackage{cvpr}            

\usepackage{titletoc}
\usepackage{amsmath}
\usepackage{amssymb}
\usepackage{amsthm}
\usepackage{mathtools}
\usepackage{bm}
\usepackage{physics}
\usepackage{nicefrac}
\usepackage{amsfonts}
\newtheorem{assumption}{Assumption}
\newtheorem{lemma}{Lemma}

\usepackage{booktabs}
\usepackage{array}
\usepackage{multirow}
\usepackage{graphicx}
\usepackage{graphics}
\usepackage{wrapfig}
\usepackage{xcolor}
\usepackage{microtype}
\usepackage{booktabs}
\usepackage{multirow}
\usepackage{tabularx}
\usepackage[ruled,vlined]{algorithm2e}
\usepackage[table]{xcolor}
\definecolor{ourblue}{RGB}{220,235,252}
\usepackage{makecell}
\usepackage[accsupp]{axessibility}  
\newtheorem{theorem}{Theorem}

\usepackage{amsmath,amsfonts,bm}

\def\eqref#1{equation~\ref{#1}}

\def\1{\bm{1}}

\DeclareMathAlphabet{\mathsfit}{\encodingdefault}{\sfdefault}{m}{sl}
\SetMathAlphabet{\mathsfit}{bold}{\encodingdefault}{\sfdefault}{bx}{n}

\DeclareMathOperator*{\argmax}{arg\,max}

\definecolor{cvprblue}{rgb}{0.21,0.49,0.74}
\usepackage[
  pagebackref,
  breaklinks,
  colorlinks,
  allcolors=cvprblue
]{hyperref}

\def\paperID{*****}
\def\confName{CVPR}
\def\confYear{2026}

\title{FRAME: Forensic Routing and Adaptive Multi-path Evidence Fusion for Image Manipulation Detection}

\author{
Kaixiang Zhao\textsuperscript{1} \quad
Tianrun Yu\textsuperscript{1} \quad
Aoxu Zhang\textsuperscript{2} \quad
Junhao Su\textsuperscript{1} \\
Porter Jenkins\textsuperscript{1} \quad
Amanda Hughes\textsuperscript{1}\thanks{Corresponding author: \texttt{amanda\_hughes@byu.edu}}\\[2pt]
\textsuperscript{1}Brigham Young University \quad
\textsuperscript{2}Rutgers University\\[2pt]
{\tt\small \{kzhao2, tianruny, sjh666, pjenkins, amanda\_hughes\}@byu.edu}\\
{\tt\small aoxu.zhang@rutgers.edu}
}

\begin{document}
\maketitle

\begin{abstract}
The proliferation of sophisticated image editing tools and generative artificial intelligence models has made verifying the authenticity of digital images increasingly challenging, with important implications for journalism, forensic analysis, and public trust.
Although numerous forensic algorithms, ranging from handcrafted methods to deep learning-based detectors, have been developed for manipulation detection, individual methods often suffer from limited robustness, fragmented evidence, or weak generalization across manipulation types and image conditions.
To address these limitations, we present \textbf{FRAME}, a method for \textbf{F}orensic \textbf{R}outing and \textbf{A}daptive \textbf{M}ulti-path \textbf{E}vidence fusion for image manipulation detection.
FRAME organizes diverse forensic algorithms into a multi-path analysis space, adaptively selects informative forensic paths for each input image, and fuses complementary evidence to improve detection and localization performance.
By moving beyond single-method analysis and fixed fusion strategies, FRAME provides a more robust and flexible approach to image forensic reasoning while preserving interpretable forensic cues from multiple evidence sources.
Experimental results demonstrate the effectiveness of FRAME across diverse manipulation scenarios.
Code is available at \href{https://github.com/kzhao5/FRAME}{https://github.com/kzhao5/FRAME}.
\end{abstract}


\section{Introduction}

Digital images are now central to communication, journalism, science, entertainment, and everyday decision making. They support reporting, documentation, medical analysis, online commerce, public safety, and social interaction. Their wide availability and low cost make them useful in many real-world applications. As a result, images are often treated as trustworthy evidence in both public and private settings.

However, this dependence on digital images also creates a serious security and trust problem. Modern editing tools make it easy to alter images while preserving a realistic appearance, which makes manipulation difficult to detect by visual inspection alone \cite{guillaro2023trufor, verdoliva2020media}. This threat matters because forged images can spread misinformation, damage reputations, distort public opinion, weaken scientific or legal evidence, and reduce trust in digital media \cite{verdoliva2020media}. For example, a manipulated image can be used to mislead news audiences, falsify visual evidence in an investigation, or create false medical or scientific records. These risks make reliable image manipulation detection an urgent need.

Many researchers have studied this problem. Traditional forensic methods analyze artifacts such as JPEG inconsistencies \cite{fridrich2009digital}, sensor noise \cite{chen2008determining}, or color filter array traces \cite{ferrara2012image}. These methods are interpretable, but each method usually captures only one type of evidence, so its output can be incomplete or unstable. Deep learning methods have improved performance across broader manipulation types and often combine multiple cues within one model \cite{bayar2016deep, guillaro2023trufor}. However, these models are often hard to interpret, and their decisions can be difficult to verify. Some recent work has explored fusion across multiple forensic signals, but most existing systems still rely on fixed combinations or fixed routing strategies. As a result, existing exploration in developing adaptive multi-cue forensic analysis remains nascent.

This problem remains difficult for three main reasons. First, different manipulations leave different forensic traces, so no single detector is reliable across all cases. Second, outputs from multiple forensic methods are often noisy, fragmented, or inconsistent, which makes it hard to combine them into one coherent prediction. Third, fixed fusion rules cannot adapt to differences in image content, compression history, and manipulation type. These challenges are not fully addressed by prior methods, and they limit both robustness and interpretability in practical forensic analysis.

To address these challenges, we introduce FRAME. The key idea is to treat different combinations of forensic tools as candidate analysis paths rather than forcing all tools to contribute equally. FRAME first organizes heterogeneous forensic algorithms into a supernet-inspired module pool. It then evaluates candidate paths and selects the most useful ones for the input image and its suspected manipulation context. This adaptive routing step addresses the problem that no single detector works well in every case. Next, FRAME fuses the evidence from the selected paths into a unified prediction map, which reduces noise and preserves complementary forensic cues. This fusion step addresses the problem of fragmented and inconsistent outputs. Finally, we provide a theoretical view of context-adaptive path selection and identify conditions under which an accurate selector can improve over uniform fusion and the best fixed single-algorithm baseline. In this paper, we focus on this computational core of adaptive path selection and evidence fusion.

\noindent The primary contributions of this work are:
\begin{itemize}
    \item We propose FRAME, an adaptive framework that routes and fuses multiple forensic cues for image manipulation detection.
    \item We introduce a supernet-inspired formulation of forensic analysis paths over heterogeneous forensic algorithms.
    \item We provide a theoretical analysis of context-adaptive path selection and identify conditions under which it improves over fixed baselines.
    \item We design a fusion mechanism that produces a unified prediction while preserving the contribution of individual forensic methods.
\end{itemize}
\section{Preliminaries and Related Work}
\label{sec:related}

\noindent\textbf{Notations and Problem Setup.}
Let \(I\) denote an input digital image, and let \(\mathcal{A} = \{A_1, A_2, \dots, A_N\}\) denote a set of \(N\) forensic algorithms used as building blocks in FRAME. These algorithms may include methods for compression analysis, sensor noise analysis, CFA artifacts, and copy-move detection, among others, many of which are available through existing forensic toolkits such as \textit{pyIFD}. Applying algorithm \(A_i \in \mathcal{A}\) to image \(I\) produces an output \(O_i = A_i(I)\), where \(O_i\) may take the form of a heatmap, mask, score, or other forensic signal.

We define a path \(P_j\) as a selected subset or ordered combination of algorithms from \(\mathcal{A}\) used to analyze a given image. The overall method can be viewed as a supernet-inspired structure \(\mathcal{S}\) that contains many possible analysis paths. Given an image \(I\), a suspected manipulation type \(M_t\), and optionally additional contextual information \(C\), the goal is to select an effective path \(P_j^*\) and fuse its outputs into a final result \(F\). This work focuses not on retraining the underlying forensic algorithms, but on adaptively selecting, and organizing their outputs within a unified analysis pipeline. This setup provides a common vocabulary for the algorithmic formulation in Section~\ref{sec:methodology} and the theoretical abstraction in Section~\ref{sec:theory}.

\noindent\textbf{Supernet-Inspired Formulation.}
The term supernet originates in the field of Neural Architecture Search (NAS) \cite{pham2018efficient, cai2018proxylessnas}, where a large neural network architecture contains many candidate subnetworks and search methods are used to identify effective task-specific configurations. We adopt this idea at a conceptual level. Instead of treating neural network layers as searchable components, we treat individual forensic algorithms as modular units that can be combined into alternative analysis paths.

This formulation allows FRAME to adapt its evidence integration strategy to the input image rather than applying all forensic tools uniformly. The key idea is to support adaptive path selection across multiple complementary forensic signals, followed by fusion of the selected outputs. Because FRAME operates over existing forensic methods rather than learned subnetworks in the usual NAS sense, the supernet terminology is best understood here as an organizing abstraction for adaptive multi-path forensic analysis.

\noindent\textbf{Image Manipulation Detection.}
Image manipulation detection has been studied through both traditional forensic methods and more recent learning-based approaches. Traditional methods typically examine artifacts introduced during image creation or editing. These include approaches such as Error Level Analysis (ELA) and Discrete Cosine Transform (DCT) coefficient analysis, which identify irregularities in JPEG compression patterns \cite{fridrich2009digital}. Noise-based techniques, including Photo Response Non-Uniformity (PRNU), use camera sensor noise patterns to detect inconsistencies that may indicate tampering \cite{chen2008determining}. Other methods focus on artifacts associated with Color Filter Array (CFA) interpolation \cite{ferrara2012image}, duplicated regions in copy-move forgeries \cite{christlein2012evaluation}, or inconsistencies in lighting and shadows \cite{kee2014exposing}. Libraries such as \textit{pyIFD} bring many of these algorithms together within a common toolkit. These methods are based on well-understood principles and can be effective for specific, known forms of manipulation. However, they also have important limitations. Their outputs, frequently in the form of heatmaps or binary masks, can be noisy, ambiguous, and difficult to interpret without significant expertise. Synthesizing evidence from multiple disparate traditional methods into a coherent assessment can also be challenging. In addition, these methods often struggle with novel manipulation techniques and complex post-processing operations. Support is typically limited to individual outputs, with little integration across methods.

Learning-based approaches, particularly Convolutional Neural Networks (CNNs), have improved performance across a broader range of manipulation types. Early deep learning models were often trained to detect specific manipulation types \cite{bayar2016deep}, while later work moved toward more general detection capabilities \cite{bayar2018constrained}. More recent architectures incorporate specialized layers to capture subtle forensic traces or use multi-stream and fusion strategies to combine evidence from different sources or modalities. For example, MMFusion combines outputs from various forensic filters \cite{ahmad2023mmfusion}, TruFor uses a transformer to fuse RGB information with noise fingerprints \cite{guillaro2023trufor}, and OMG-Fuser incorporates object guidance \cite{karageorgiou2024omgfuser}. While these deep learning models have achieved state-of-the-art accuracy on many benchmarks, they predominantly function as black boxes. This lack of transparency hinders trust and makes it difficult to understand the basis for their predictions. Further, these models typically require large labeled datasets for training and may generalize poorly to manipulation techniques that differ significantly from their training data.

\noindent\textbf{Positioning of Our Approach.}
Our work is motivated by the gap between interpretability and performance in existing image forensic methods. Traditional algorithms often expose meaningful forensic signals but are difficult to combine into an overall assessment, while learning-based methods can achieve strong detection performance but often provide limited interpretability. We address this gap through FRAME, which integrates evidence from multiple forensic algorithms, supports adaptive path selection, and preserves both fused outputs and method-specific contributions for analysis. In this paper, the emphasis is on the method formulation, evidence-fusion strategy, and a theoretical account of when adaptive path selection is beneficial. This places our approach at the intersection of image forensics, evidence fusion, and interpretable forensic analysis.

\section{Methodology}
\label{sec:methodology}

FRAME models image forensic analysis as adaptive path selection over a set of modular forensic algorithms. Given an input image \(I\) and a suspected manipulation type \(M_t\), the method samples candidate analysis paths from a shared forensic module pool, scores these paths with a learned selector, and fuses the outputs of the selected paths into a final prediction. Figure~\ref{fig:forensicfusion_method_illustration} gives an overview of this pipeline, and Algorithm~\ref{alg:frame} summarizes the offline training and online inference phases. For clarity, the theoretical development in Section~\ref{sec:theory} focuses on the core top-1 path-selection problem, while the top-\(k\) fusion variant used in the experiments is a practical extension of the same pipeline.

\begin{figure*}[t]
  \centering
  \includegraphics[width=\textwidth]{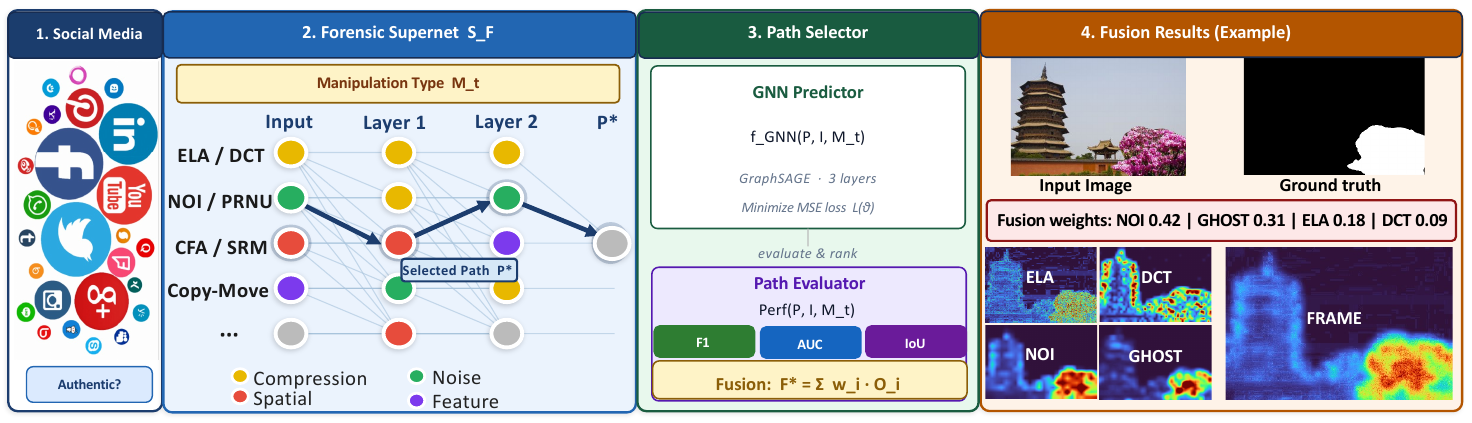}
  \caption{Overview of FRAME. Given an input image, FRAME samples candidate paths from a forensic supernet, scores them with a learned path selector, and fuses the selected outputs into a final heatmap $F^*$.}
  \label{fig:forensicfusion_method_illustration}
\end{figure*}

\noindent\textbf{Forensic algorithm module.}
Each forensic algorithm module \(\mathcal{A}_k\) wraps one underlying algorithm \(A_k\) which maps an input image to a forensic output such as a heatmap, mask, or score. Examples include Error Level Analysis (ELA), Discrete Cosine Transform (DCT) analysis, and noise-based methods, many of which are available in \textit{pyIFD} \cite{vidalmata2023effectiveness}. Formally, \(\mathcal{A}_k = \{alg_k(\cdot), \Theta_k\}\) with \(alg_k(\cdot): \mathcal{I} \rightarrow \mathcal{O}_k\), where \(\Theta_k\) denotes the configurable parameters of the algorithm.

\noindent\textbf{Forensic supernet and analysis path.}
We define the forensic supernet \(\mathcal{S}_{F} = \{\mathbb{A}, \mathcal{C}_{F}\}\), where \(\mathbb{A} = \{\mathcal{A}_1, \dots, \mathcal{A}_N\}\) is the set of available modules and \(\mathcal{C}_{F}\) specifies their allowed combinations. This induces a directed acyclic graph in which nodes correspond to modules and edges represent valid data flow or fusion operations. A forensic analysis path \(\mathcal{P}_j = \{\mathcal{V}_j, \mathcal{E}_j\}\) is a subgraph sampled from \(\mathcal{S}_{F}\), where \(\mathcal{V}_j \subseteq \mathbb{A}\) is the subset of selected modules and \(\mathcal{E}_j\) specifies their connectivity. Each path produces a fused intermediate output \(F_j\) which summarizes the evidence from the selected modules.

\noindent\textbf{Problem formulation.}
Let \(\Pi(\mathcal{S}_{F})\) denote the set of valid paths induced by the supernet. Given an image \(I\) and a suspected manipulation type \(M_t\), the goal is to identify the path \(\mathcal{P}^*\) which maximizes task-specific performance for that image and manipulation context:
\[
\mathcal{P}^* = \argmax_{\mathcal{P} \in \Pi(\mathcal{S}_{F})} \mathrm{Perf}(\mathcal{P}, I, M_t).
\]
In FRAME, this quantity is estimated by a learned predictor \(f_{\mathrm{GNN}}(\mathcal{P}, I, M_t)\) which scores candidate paths before execution \cite{kipf2017semi,lukasik2020neural}. Section~\ref{sec:theory} refines this objective in terms of conditional expected performance and explicit comparison baselines. The practical system retains the top-\(k\) highest-scoring paths and fuses their outputs, which is the variant used in our experiments.

\begin{algorithm}[t]
\small
\DontPrintSemicolon
\caption{FRAME: Adaptive Path Selection and Fusion}
\label{alg:frame}
\KwIn{Supernet $\mathcal{S}_{F}$; eval set $\mathcal{D}_{eval}$; candidate size $K$; fusion size $k$.}
\tcp{Offline training phase}
\ForEach{$I_i \in \mathcal{D}_{eval}$ with manipulation type $M_{t,i}$}{
    Sample $K$ candidate paths $\{\mathcal{P}_1, \dots, \mathcal{P}_K\}$ from $\mathcal{S}_{F}$\;
    \For{$j = 1, \dots, K$}{
        Apply $\mathcal{P}_j$ to $I_i$ and obtain fused output $F_{j,i}$\;
        Compute score $y_{j,i}$ against ground truth (e.g., F1, IoU)\;
    }
}
Build $\mathcal{D}_{train} = \{(\mathcal{P}_j, I_i, M_{t,i}, y_{j,i})\}$\;
Train $f_{\mathrm{GNN}}$ by minimizing $\sum \lVert f_{\mathrm{GNN}}(\mathcal{P}_j, I_i, M_{t,i}) - y_{j,i} \rVert^2$\;
\tcp{Online inference phase}
\KwIn{Test image $I_{\mathrm{new}}$ with type $M_{t,\mathrm{new}}$.}
Sample $K$ candidate paths from $\mathcal{S}_{F}$\;
Score each path: $s_j \gets f_{\mathrm{GNN}}(\mathcal{P}_j, I_{\mathrm{new}}, M_{t,\mathrm{new}})$\;
Select top-$k$ paths $\{\mathcal{P}_{(1)}, \dots, \mathcal{P}_{(k)}\}$ by ranking $\{s_j\}$\;
Execute selected paths to obtain outputs $\{O_{(1)}, \dots, O_{(k)}\}$\;
Fuse: $F^* = \sum_{i=1}^{k} w_i \cdot O_{(i)}$ with learned weights $\{w_i\}$\;
\KwOut{Fused heatmap $F^*$ and detection score.}
\end{algorithm}

\noindent\textbf{GNN-guided path optimization.}
Algorithm~\ref{alg:frame} summarizes the full pipeline. The intuition is simple: different forensic algorithms respond to different artifacts, so the system should not apply all algorithms with equal weight. Instead, it first identifies promising paths, then estimates which of them are likely to perform well, and finally combines the strongest outputs. In the offline phase, we sample $K$ candidate paths from $\mathcal{S}_{F}$ for each training image, optionally guided by heuristics which reflect known strengths of certain algorithms. Each sampled path $\mathcal{P}_k$ is applied to an image $I_i \in \mathcal{D}_{eval}$ to produce a fused output $F_{k,i}$, which is compared with the ground truth using a task-specific metric such as IoU or F1 to obtain a performance score $y_{k,i}$. This process constructs a training set $\mathcal{D}_{train} = \{(\mathcal{P}_k, I_i, M_{t,i}, y_{k,i})\}$, which is used to train a graph neural network predictor $f_{\mathrm{GNN}}$. The predictor takes as input a graph representation of the path, augmented with image-level features and manipulation metadata, and outputs a scalar score. The training objective minimizes the mean squared error between predicted and observed path performance:
\begin{equation}
\mathcal{L}
=
\sum_{(\mathcal{P}_k, I_i, M_{t,i}, y_{k,i}) \in \mathcal{D}_{train}}
\left\lVert
f_{\mathrm{GNN}}(\mathcal{P}_k, I_i, M_{t,i}) - y_{k,i}
\right\rVert^2.
\end{equation}
Once trained, the predictor ranks candidate paths without exhaustively evaluating every path on every new image.

\noindent\textbf{Online inference and fusion.}
At test time, FRAME samples candidate paths from the supernet and scores them using the trained predictor. In the top-1 version, the highest-scoring path is executed to produce the final output $F^*$. In the top-$k$ version, the predictor retains a small set of high-scoring paths, and their outputs are fused into a final heatmap which is then used for both pixel-level localization and image-level detection.

\section{Theoretical Formulation}
\label{sec:theory}

\noindent\textbf{Intuition.}
Different forensic algorithms are reliable in different situations: one may work well on JPEG-compressed splicing, while another may be more sensitive to copy-move forgery. A fixed combination treats all algorithms the same way for every image and wastes the strengths of each. FRAME instead learns a selector that, given an image and its context, predicts which path is most likely to perform well. This section formalizes this idea and identifies conditions under which a sufficiently accurate selector is guaranteed to improve over both uniform fusion and the best single algorithm. For tractability, we focus on the top-1 selection rule; the top-$k$ fusion variant in Section~\ref{sec:experiments} is an empirical extension built on top of this selector.

\noindent\textbf{Notation.}
Let $X \triangleq (I,M,C)\in\mathcal{X}$ denote the context, where $I\in\mathcal{I}$ is an input image, $M\in\mathcal{M}$ is a manipulation type, and $C\in\mathcal{C}$ is auxiliary context. Let $(X,Y)\sim\mathcal{D}$ with supervision $Y\in\mathcal{Y}$ (e.g., $Y\in\{0,1\}^{H\times W}$ for pixel-level masks). Let $\mathcal{A}\triangleq\{A_1,\dots,A_N\}$ be $N$ heterogeneous forensic algorithms. For each $A_i\in\mathcal{A}$, after any necessary resizing and score normalization, define its aligned output $O_i \triangleq A_i(I)\in[0,1]^{H\times W}.$
Let $\mathcal{P}$ be the set of feasible paths induced by a forensic supernet. Each path $P\in\mathcal{P}$ is a DAG $P=(V_P,E_P)$ with $V_P\subseteq\mathcal{A}$ and $E_P\subseteq V_P\times V_P$. Given $\{O_i\}_{A_i\in V_P}$, the fusion rule $\Phi_P$ produces $F_P \triangleq \Phi_P\big(\{O_i\}_{A_i\in V_P}\big)\in\mathcal{F},\
(\text{e.g., }F_P=\sum_{A_i\in V_P} w_{P,i}O_i).$
Let $\ell(F_P,Y)\in[0,1]$ be a bounded loss and define the performance $y(P,X,Y)\triangleq 1-\ell(F_P,Y)\in[0,1].$
The conditional expected performance is $\mu(P,X)\triangleq \mathbb{E}_{Y\sim\mathcal{D}(\cdot\,|\,X)}\big[y(P,X,Y)\big]
= 1-\mathbb{E}_{Y\sim\mathcal{D}(\cdot\,|\,X)}\big[\ell(F_P,Y)\big].$
Throughout, whenever an $\arg\max$ is set-valued, we assume an arbitrary but fixed measurable tie-breaking rule.

\subsection{Problem Setting}
\label{sec:problem-setting}

We study \emph{context-adaptive path selection} over the feasible path set $\mathcal{P}$. A (measurable) selection rule is a mapping $\pi:\mathcal{X}\to\mathcal{P}$, and its expected performance is
$J(\pi)\triangleq \mathbb{E}_{(X,Y)\sim\mathcal{D}}\big[y(\pi(X),X,Y)\big]
=\mathbb{E}_{X\sim\mathcal{D}_X}\big[\mu(\pi(X),X)\big].$
The goal is to find an optimal rule $\pi^\star \in \arg\max_{\pi:\mathcal{X}\to\mathcal{P}} J(\pi),$
which is equivalently characterized by the per-instance oracle
\begin{equation}
\label{eq:global-oracle}
    P^\star(X)\in\arg\max_{P\in\mathcal{P}} \mu(P,X).
\end{equation}

We compare against two canonical baselines that are special elements of $\mathcal{P}$: the single-algorithm path $P^{(i)}$ with $V_{P^{(i)}}=\{A_i\}$ for any $i\in[N]$, and the uniform all-algorithms fusion path $P^{(\mathrm{avg})}$ with
$V_{P^{(\mathrm{avg})}}=\mathcal{A}$ and $w_{P^{(\mathrm{avg})},i}=1/N$. Since $\mu(P,X)$ is unknown, we learn a predictor $f_\theta:\mathcal{P}\times\mathcal{X}\to\mathbb{R}$ from supervised path-level scores using $N_{\mathrm{tr}}$ training contexts. At test time, given $X$, we construct a candidate set $\mathcal{C}(X)\subseteq\mathcal{P}$ and select
\begin{equation}
\label{eq:learned-selection}
    \widehat{P}(X)\in\arg\max_{P\in\mathcal{C}(X)} f_\theta(P,X),
\end{equation}
then output $F_{\widehat{P}(X)}$. For a fair comparison, we assume $\mathcal{C}(X)$ always contains
$\{P^{(i)}\}_{i=1}^N$ and $P^{(\mathrm{avg})}$. The theoretical results below use only the high-probability accuracy property in Assumption~\ref{assump:uniform}.

\subsection{Conditional Theoretical Guarantees}
\label{sec:theory-guarantees}

We provide two \emph{conditional strict improvement} guarantees for the learned selection rule in \eqref{eq:learned-selection}, relative to (i) uniform fusion over all algorithms and (ii) the best single algorithm baseline. Rather than assuming a fixed average margin between the oracle and a baseline, we combine (a) a Tsybakov/no-tie condition \cite{mammen1999smooth,tsybakov2004optimal} controlling how often the best and second-best candidate paths are close, with (b) the existence of a non-negligible subset of contexts on which a given baseline is provably suboptimal. These conditions capture heterogeneous regimes where context-agnostic baselines are systematically limited and a sufficiently accurate scorer can exploit context.

Define the candidate-set oracle
\begin{equation}
\label{eq:cand-oracle}
P^\star_{\mathcal{C}}(X)\in\arg\max_{P\in\mathcal{C}(X)} \mu(P,X).
\end{equation}
Let $\mu_{(1)}(X)\ge \mu_{(2)}(X)$ denote the largest and the second-largest values in the finite set
$\{\mu(P,X):P\in\mathcal{C}(X)\}$ (ties allowed), and define the candidate-set gap
\begin{equation}
\label{eq:gap-def}
g(X)\triangleq \mu_{(1)}(X)-\mu_{(2)}(X)\in[0,1].
\end{equation}

\begin{assumption}[Bounded loss]
\label{assump:bounded}
For any $P\in\mathcal{P}$ and $(X,Y)\sim\mathcal{D}$, $\ell(F_P,Y)\in[0,1]$ almost surely.
Consequently, $y(P,X,Y)=1-\ell(F_P,Y)\in[0,1]$ and $\mu(P,X)\in[0,1]$.
\end{assumption}

\begin{assumption}[Finite candidate set with baseline inclusion]
\label{assump:candidate}
For each $X\in\mathcal{X}$, the candidate set $\mathcal{C}(X)\subseteq\mathcal{P}$ is finite and non-empty, and $\{P^{(i)}\}_{i=1}^N \cup \{P^{(\mathrm{avg})}\}\subseteq \mathcal{C}(X),\ \forall X\in\mathcal{X}.$
\end{assumption}

\begin{assumption}[High-probability uniform accuracy on the candidate set]
\label{assump:uniform}
For any $\delta\in(0,1)$, with probability at least $1-\delta$ over the training data,
\begin{equation}
\label{eq:hp-uniform}
\sup_{X\in\mathcal{X}}\ \sup_{P\in\mathcal{C}(X)} \big|f_\theta(P,X)-\mu(P,X)\big|\ \le\ \varepsilon(N_{\mathrm{tr}},\delta),
\end{equation}
for some error function $\varepsilon(N_{\mathrm{tr}},\delta)\ge 0$. This is the only property of $f_\theta$ used in the results below.
\end{assumption}

\begin{assumption}[Tsybakov/no-tie condition on the candidate-set gap]
\label{assump:tsybakov}
There exist constants $c_0>0$ and $\alpha>0$ such that for all $t\in[0,1]$, $\mathbb{P}\big(g(X)\le t\big)\ \le\ c_0\, t^\alpha,$
where $g(X)$ is defined in \eqref{eq:gap-def} and the probability is over $X\sim\mathcal{D}_X$.
\end{assumption}

\begin{assumption}[Uniform-all baseline is suboptimal on a non-negligible subset]
\label{assump:avg-subopt}
There exist a measurable set $\mathcal{S}_{\mathrm{avg}}\subseteq\mathcal{X}$ and constants
$p_{\mathrm{avg}}\in(0,1]$ and $\gamma_{\mathrm{avg}}>0$ such that
$\mathbb{P}(X\in\mathcal{S}_{\mathrm{avg}})\ge p_{\mathrm{avg}}$ and $\mu(P^\star_{\mathcal{C}}(X),X)\ \ge\ \mu(P^{(\mathrm{avg})},X)\ +\ \gamma_{\mathrm{avg}},
\ \forall X\in\mathcal{S}_{\mathrm{avg}}.$
\end{assumption}

\noindent\emph{Remark.}
Assumption~\ref{assump:avg-subopt} captures the regime where averaging all algorithms can dilute specialized signals. On a subset of contexts with probability at least $p_{\mathrm{avg}}$, some candidate path outperforms uniform fusion by a fixed margin $\gamma_{\mathrm{avg}}$.

\begin{assumption}[Best-single baseline is suboptimal on a non-negligible subset]
\label{assump:single-subopt}
Let $i^\dagger\in\arg\max_{i\in[N]}\mathbb{E}_{X\sim\mathcal{D}_X}[\mu(P^{(i)},X)]$ and define the
best-single baseline $P^{(\mathrm{bs})}\triangleq P^{(i^\dagger)}$.
There exist a measurable set $\mathcal{S}_{\mathrm{bs}}\subseteq\mathcal{X}$ and constants
$p_{\mathrm{bs}}\in(0,1]$ and $\gamma_{\mathrm{bs}}>0$ such that
$\mathbb{P}(X\in\mathcal{S}_{\mathrm{bs}})\ge p_{\mathrm{bs}}$ and $\mu(P^\star_{\mathcal{C}}(X),X)\ \ge\ \mu(P^{(\mathrm{bs})},X)\ +\ \gamma_{\mathrm{bs}},
\ \forall X\in\mathcal{S}_{\mathrm{bs}}.$
\end{assumption}

\noindent\emph{Remark.}
Assumption~\ref{assump:single-subopt} formalizes heterogeneity: the best fixed single algorithm in expectation need not be optimal for every context. Accordingly, there exists a subset of contexts of mass at least $p_{\mathrm{bs}}$ on which another candidate path outperforms that fixed choice by margin $\gamma_{\mathrm{bs}}$.

\begin{theorem}[Strict improvement over uniform-all fusion]
\label{thm:strict-avg}
Assume Assumptions~\ref{assump:bounded}--\ref{assump:uniform}, Assumption~\ref{assump:tsybakov}, and
Assumption~\ref{assump:avg-subopt}. Let $\widehat{P}(X)$ be the learned selection rule in \eqref{eq:learned-selection},
and let $P^{(\mathrm{avg})}$ denote the uniform-all fusion baseline. Let $\delta\in(0,1)$ be the failure probability in
Assumption~\ref{assump:uniform}. Let $c_0>0$ and $\alpha>0$ be the no-tie constants in
Assumption~\ref{assump:tsybakov}, and define $C_\alpha \triangleq 2^{\alpha+1}c_0$. If
\begin{equation}
\label{eq:cond-strict-avg}
\varepsilon(N_{\mathrm{tr}},\delta)\ <\ \left(\frac{p_{\mathrm{avg}}\gamma_{\mathrm{avg}}}{C_\alpha}\right)^{\!\frac{1}{\alpha+1}},
\end{equation}
then with probability at least $1-\delta$ (over the training data),
\[
\mathbb{E}_{X\sim\mathcal{D}_X}\!\big[\mu(\widehat{P}(X),X)\big]
\ >\
\mathbb{E}_{X\sim\mathcal{D}_X}\!\big[\mu(P^{(\mathrm{avg})},X)\big].
\]
\end{theorem}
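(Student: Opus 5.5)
We work on the event $E$ of probability at least $1-\delta$ from Assumption~\ref{assump:uniform}, on which $|f_\theta(P,X)-\mu(P,X)|\le\varepsilon$ for all $X$ and all $P\in\mathcal{C}(X)$, with $\varepsilon\triangleq\varepsilon(N_{\mathrm{tr}},\delta)$. Everything below is deterministic on $E$. The strategy is to split the target difference through the candidate-set oracle:
\[
\mathbb{E}[\mu(\widehat P(X),X)]-\mathbb{E}[\mu(P^{(\mathrm{avg})},X)]
=\underbrace{\mathbb{E}[\mu(P^\star_{\mathcal{C}}(X),X)-\mu(P^{(\mathrm{avg})},X)]}_{\text{oracle gain}}-\underbrace{\mathbb{E}[\mu(P^\star_{\mathcal{C}}(X),X)-\mu(\widehat P(X),X)]}_{\text{selection regret}}.
\]
We then show the oracle gain is at least $p_{\mathrm{avg}}\gamma_{\mathrm{avg}}$ and the regret is at most $C_\alpha\varepsilon^{\alpha+1}$. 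Condition \eqref{eq:cond-strict-avg} is exactly the statement that the second bound is smaller than the first.

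\textbf{Oracle gain.} By Assumption~\ref{assump:candidate}, $P^{(\mathrm{avg})}\in\mathcal{C}(X)$, so the integrand is nonnegative everywhere. On $\mathcal{S}_{\mathrm{avg}}$ it is at least $\gamma_{\mathrm{avg}}$ by Assumption~\ref{assump:avg-subopt}. Hence the oracle gain is at least $\gamma_{\mathrm{avg}}\,\mathbb{P}(X\in\mathcal{S}_{\mathrm{avg}})\ge p_{\mathrm{avg}}\gamma_{\mathrm{avg}}$. Measurability of the integrands is granted by the fixed measurable tie-breaking convention, and boundedness (Assumption~\ref{assump:bounded}) makes all expectations finite.

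\textbf{Selection regret.} This is the margin-condition step and the main obstacle. Fix $X$ and write $r(X)\triangleq\mu(P^\star_{\mathcal{C}}(X),X)-\mu(\widehat P(X),X)\ge 0$.
\begin{itemize}
\item \emph{Pointwise bound.} Since $\widehat P$ maximizes $f_\theta$ over $\mathcal{C}(X)$, we have $\mu(\widehat P)\ge f_\theta(\widehat P)-\varepsilon\ge f_\theta(P^\star_{\mathcal{C}})-\varepsilon\ge\mu(P^\star_{\mathcal{C}})-2\varepsilon$. So $r(X)\le 2\varepsilon$.
\item \emph{Regret vanishes on large gaps.} If $r(X)>0$, then $\widehat P(X)$ is not a top-valued path. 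Its value is therefore at most $\mu_{(2)}(X)$, giving $r(X)\ge g(X)$. Combined with the pointwise bound, if $g(X)>2\varepsilon$ then $r(X)=0$.
\item \emph{Integrate.} It follows that $\mathbb{E}[r(X)]\le 2\varepsilon\,\mathbb{P}(g(X)\le 2\varepsilon)\le 2\varepsilon\, c_0(2\varepsilon)^\alpha=2^{\alpha+1}c_0\,\varepsilon^{\alpha+1}=C_\alpha\varepsilon^{\alpha+1}$, using Assumption~\ref{assump:tsybakov}.
\end{itemize}
Two points need care. First, Assumption~\ref{assump:tsybakov} is stated for $t\in[0,1]$. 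If $2\varepsilon>1$, use the trivial bound $\mathbb{P}\le 1$ instead; the hypothesis then still yields the claim, or one may note that the assumption with $t=1$ forces $c_0\ge 1$ whenever $g\le 1$ a.s. Second, ties among the top values are handled by the observation that any path achieving $\mu_{(1)}$ contributes zero regret.

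\textbf{Conclusion.} On $E$, the difference is at least $p_{\mathrm{avg}}\gamma_{\mathrm{avg}}-C_\alpha\varepsilon^{\alpha+1}$. This is strictly positive precisely when $\varepsilon<(p_{\mathrm{avg}}\gamma_{\mathrm{avg}}/C_\alpha)^{1/(\alpha+1)}$, which gives the claim with probability at least $1-\delta$.
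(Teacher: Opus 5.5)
Your proof is correct and matches the paper's argument step for step: the same decomposition through the candidate-set oracle $P^\star_{\mathcal{C}}$, the oracle-gain bound $p_{\mathrm{avg}}\gamma_{\mathrm{avg}}$ from baseline inclusion plus Assumption~\ref{assump:avg-subopt}, and the regret bound $r(X)\le 2\varepsilon\,\mathbf{1}\{g(X)\le 2\varepsilon\}$ integrated against the no-tie condition. The differences are minor: you derive exact selection on $\{g>2\varepsilon\}$ from $r\ge g$ whenever $r>0$ rather than by the paper's direct score comparison, and you handle the case $2\varepsilon>1$, where Assumption~\ref{assump:tsybakov} is not stated, via $c_0\ge 1$, a check the paper leaves implicit.
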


\noindent This theorem formalizes a simple conditional message: on the high-probability event in
Assumption~\ref{assump:uniform}, if the scoring error $\varepsilon(N_{\mathrm{tr}},\delta)$ is small enough and
Assumption~\ref{assump:avg-subopt} holds, then selecting the best-scored path improves over the ``average over all
algorithms'' baseline in expected performance. Thus, under heterogeneous contexts where uniform fusion is suboptimal on
a non-negligible subset, sufficiently accurate scoring leads to a strict average-performance gain. Detailed proof is
provided in Appendix~\ref{sec:app-proof-thm-avg}.

\begin{theorem}[Strict improvement over the best single algorithm]
\label{thm:strict-bs}
Assume Assumptions~\ref{assump:bounded}--\ref{assump:uniform}, Assumption~\ref{assump:tsybakov}, and
Assumption~\ref{assump:single-subopt}. Let $\widehat{P}(X)$ be the learned selection rule in \eqref{eq:learned-selection}.
Let $\delta$ be as above, and let $c_0,\alpha$ and $C_\alpha\triangleq 2^{\alpha+1}c_0$ be as above. If
\begin{equation}
\label{eq:cond-strict-bs}
\varepsilon(N_{\mathrm{tr}},\delta)\ <\ \left(\frac{p_{\mathrm{bs}}\gamma_{\mathrm{bs}}}{C_\alpha}\right)^{\!\frac{1}{\alpha+1}},
\end{equation}
then with probability at least $1-\delta$ (over the training data),
\[
\mathbb{E}_{X\sim\mathcal{D}_X}\!\big[\mu(\widehat{P}(X),X)\big]
\ >\
\max_{i\in[N]}\mathbb{E}_{X\sim\mathcal{D}_X}\!\big[\mu(P^{(i)},X)\big].
\]
\end{theorem}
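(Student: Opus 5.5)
The plan is to reduce the claim to a regret bound for the learned selector relative to the candidate-set oracle $P^\star_{\mathcal{C}}$ in \eqref{eq:cand-oracle}, and then to compare the oracle with the best-single baseline using Assumption~\ref{assump:single-subopt}. Throughout I work on the event $\mathcal{E}$ of Assumption~\ref{assump:uniform}, which has probability at least $1-\delta$. On $\mathcal{E}$ we have $|f_\theta(P,X)-\mu(P,X)|\le\varepsilon$ uniformly over $X\in\mathcal{X}$ and $P\in\mathcal{C}(X)$, where $\varepsilon\triangleq\varepsilon(N_{\mathrm{tr}},\delta)$. All remaining statements are deterministic consequences of this event.

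\textbf{Step 1: pointwise regret.} Fix $X$ and set $r(X)\triangleq\mu(P^\star_{\mathcal{C}}(X),X)-\mu(\widehat{P}(X),X)\ge 0$. By the definition of $\widehat{P}$ in \eqref{eq:learned-selection}, $f_\theta(\widehat{P},X)\ge f_\theta(P^\star_{\mathcal{C}},X)$. Sandwiching each score within $\varepsilon$ of its true value gives $r(X)\le 2\varepsilon$.

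Moreover, $r(X)>0$ forces $\mu(\widehat{P},X)\le\mu_{(2)}(X)$, because $\widehat{P}$ is then a non-maximizing candidate. It follows that $g(X)\le r(X)\le 2\varepsilon$. Hence
\[
r(X)\le 2\varepsilon\,\mathbf{1}\{g(X)\le 2\varepsilon\}.
\]

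\textbf{Step 2: integrate with the no-tie condition.} Taking expectations and applying Assumption~\ref{assump:tsybakov} with $t=2\varepsilon$ gives
\[
\mathbb{E}[r(X)]\le 2\varepsilon\, c_0(2\varepsilon)^\alpha=C_\alpha\varepsilon^{\alpha+1}.
\]
This requires care when $2\varepsilon>1$, since the assumption only covers $t\in[0,1]$. Since $g\le 1$ always, the case $t=1$ yields $1=\mathbb{P}(g\le 1)\le c_0$. Therefore, for $t>1$, $\mathbb{P}(g\le t)\le 1\le c_0\le c_0t^\alpha$, and the bound holds for all $\varepsilon\ge 0$. This boundary check is the only step I expect to be delicate; measurability of $r$ and $g$ follows from the fixed measurable tie-breaking and the finiteness of $\mathcal{C}(X)$ in Assumption~\ref{assump:candidate}.

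\textbf{Step 3: oracle versus best single algorithm.} By Assumption~\ref{assump:candidate}, $P^{(\mathrm{bs})}\in\mathcal{C}(X)$, so $\mu(P^\star_{\mathcal{C}}(X),X)-\mu(P^{(\mathrm{bs})},X)\ge 0$ for every $X$. On $\mathcal{S}_{\mathrm{bs}}$ this difference is at least $\gamma_{\mathrm{bs}}$. Integrating therefore gives
\[
\mathbb{E}[\mu(P^\star_{\mathcal{C}},X)]-\mathbb{E}[\mu(P^{(\mathrm{bs})},X)]\ge p_{\mathrm{bs}}\gamma_{\mathrm{bs}}.
\]
Subtracting the regret bound of Step 2 yields
\[
\mathbb{E}[\mu(\widehat{P},X)]-\mathbb{E}[\mu(P^{(\mathrm{bs})},X)]\ge p_{\mathrm{bs}}\gamma_{\mathrm{bs}}-C_\alpha\varepsilon^{\alpha+1}.
\]
Condition \eqref{eq:cond-strict-bs} is exactly the statement that the right-hand side is positive. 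Finally, $\mathbb{E}[\mu(P^{(\mathrm{bs})},X)]=\max_{i\in[N]}\mathbb{E}[\mu(P^{(i)},X)]$ by the choice of $i^\dagger$, which gives the claimed strict inequality on $\mathcal{E}$.

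The same argument, with $\mathcal{S}_{\mathrm{avg}}$, $p_{\mathrm{avg}}$ and $\gamma_{\mathrm{avg}}$ in place of their best-single counterparts, proves Theorem~\ref{thm:strict-avg}.
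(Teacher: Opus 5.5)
Your proof is correct and matches the paper's argument: the same decomposition against the candidate-set oracle, the same bound $r(X)\le 2\varepsilon\,\mathbf{1}\{g(X)\le 2\varepsilon\}$ integrated via the no-tie condition (Lemmas~\ref{lem:approx-argmax}--\ref{lem:expected-regret}), and the same $p_{\mathrm{bs}}\gamma_{\mathrm{bs}}$ margin from oracle dominance and Assumption~\ref{assump:single-subopt}. Your two refinements are both sound: the observation $g(X)\le r(X)$ whenever $r(X)>0$ replaces Lemma~\ref{lem:gap-implies-correct}, and the deduction $c_0\ge 1$ covers the case $2\varepsilon>1$ that the paper's Lemma~\ref{lem:expected-regret} glosses over, although under condition~(\ref{eq:cond-strict-bs}) one already has $2\varepsilon<(p_{\mathrm{bs}}\gamma_{\mathrm{bs}}/c_0)^{1/(\alpha+1)}\le 1$.
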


\noindent This theorem shows that the benefit is not limited to beating averaging. On the same high-probability event
from Assumption~\ref{assump:uniform}, if the data contain heterogeneous regimes where the best fixed single algorithm
is systematically dominated on a non-negligible subset, then a sufficiently accurate context-dependent selector also
improves over that best fixed single choice in expectation. Detailed proof is provided in
Appendix~\ref{sec:app-proof-thm-bs}.

\noindent\textbf{Connection to the experiments.}
The baselines emphasized above map directly to the empirical protocol: Section~\ref{sec:experiments} compares against the uniform-all handcrafted fusion baseline \(P^{(\mathrm{avg})}\) and against single-algorithm handcrafted baselines \(\{P^{(i)}\}\), while the ablation study tests whether moving from top-1 selection to top-\(k\) fusion yields additional gains beyond the core selection principle formalized here. The present theory therefore supports the selector-level comparison, while the extra top-\(k\) fusion gains should be interpreted as an empirical extension beyond the scope of the formal guarantees.
\section{Experiments}
\label{sec:experiments}

We evaluate FRAME with four research questions. \textbf{RQ1:} Does adaptive selection and fusion improve over handcrafted forensic baselines that are built from the same family of forensic cues? \textbf{RQ2:} Does FRAME remain competitive with recent deep learning baselines when all methods are evaluated on the same external test sets? \textbf{RQ3:} Which parts of the method are responsible for the final performance gain, especially learned selection and learned fusion? \textbf{RQ4:} Is the method stable with respect to its main hyperparameters, and does the added accuracy come with acceptable computational cost? The rest of this section answers these questions in order. Detailed dataset information, implementation settings, computation cost results, and additional qualitative comparisons are provided in the appendix.

\subsection{Experimental Setup}

We train the learned parts of FRAME only on CASIA v2~\cite{dong2013casia}. We use 8{,}831 images for training and 1{,}918 images for validation and early stopping. We evaluate all methods on four external test sets. CASIA v1~\cite{dong2013casia} is the main benchmark and contains 1{,}754 images. Coverage~\cite{wen2016coverage} contains 200 images and is used to test copy-move forgeries. Columbia~\cite{hsu2006detecting} contains 363 images and is used only for image-level detection under our protocol. RealisticTampering~\cite{korus2017multiscale} contains 440 images and is used to test realistic splicing across cameras. This protocol separates training from evaluation and allows us to measure generalization beyond the training distribution. A concise summary of the dataset protocol is given in Appendix Table~\ref{tab:appendix_dataset_protocol}.

FRAME is built on a pool of handcrafted forensic algorithms from pyIFD~\cite{vidalmata2023effectiveness}. These algorithms do not require training. The learned part of the system consists of a path selector and a fusion module, both of which are trained only on the CASIA v2 training split. The selector uses a lightweight GraphSAGE-style architecture with three layers~\cite{hamilton2017inductive}. Unless otherwise noted, we sample \(K=50\) candidate paths and fuse the top \(k=5\) selected paths in the main experiments. We compare against three groups of baselines. The first group contains handcrafted baselines that use the same pyIFD pool without learned selection. The second group contains two shallow learned ensemble methods, a Random Forest (RF-Ensemble) and a gradient-boosted classifier (XGB-Ensemble), both trained directly on the concatenated pyIFD module outputs. The third group contains four representative deep learning methods, namely TruFor~\cite{guillaro2023trufor}, MMFusion~\cite{triaridis2024exploring}, ManTraNet~\cite{wu2019mantra}, and CAT-Net~\cite{kwon2021catnet}. We use official pretrained models for the deep learning baselines whenever available and follow their official inference procedure as closely as possible. A complete configuration table is given in Appendix Table~\ref{tab:appendix_config}.

We report both localization and detection results. For datasets with localization masks, we report pixel-level F1 and mean intersection over union. For image-level detection, we report AUC and accuracy when appropriate. Since Columbia is used only for detection, we report only image-level results on that dataset. We also measure efficiency, including average inference time per image, peak GPU memory, peak CPU memory, model size, and storage cost. For FRAME, we further separate the runtime into module execution time, selector time, and fusion time. The full computation-cost comparison is in Appendix Table~\ref{tab:appendix_compute_cost}.

\subsection{Comparison with Handcrafted Baselines}

To answer RQ1, we compare FRAME with handcrafted baselines that are built from the same family of forensic cues. This comparison is important because FRAME uses the same underlying forensic pool. A positive result would show that the gain comes from the proposed selection and fusion strategy rather than from introducing a different feature source. We therefore compare our method with the best single pyIFD module, uniform fusion over all available pyIFD modules, random selection with uniform fusion, and heuristic selection with uniform fusion. We further include RF-Ensemble and XGB-Ensemble, which learn a fixed global combination of the same forensic signals. This design tests whether a learned path policy provides an advantage over both fixed combinations and learned but context-agnostic combinations.

Table~\ref{tab:main_results} reports the main quantitative results. The handcrafted rows show a clear pattern. The best single pyIFD module is stronger than the simple fusion baselines on all four datasets, which indicates that naive aggregation can dilute useful forensic evidence. Heuristic selection is the strongest non-learned baseline, but it remains well below FRAME on every test set. RF-Ensemble and XGB-Ensemble both outperform heuristic selection, confirming that learning a combination of forensic signals is beneficial. XGB-Ensemble is slightly stronger than RF-Ensemble across all datasets. However, FRAME improves over XGB-Ensemble by 0.067 in detection AUC, 0.079 in F1, and 0.065 in mIoU on CASIA v1, with consistent gaps on the other test sets. The margin between RF-Ensemble and XGB-Ensemble is small, while the gap from XGB-Ensemble to FRAME is substantially larger. This indicates that the main source of improvement is per-image adaptive routing rather than simply learning a stronger fixed combination of forensic signals. These results support RQ1 and show that adaptive selection and fusion are more effective than both fixed and globally learned combinations of the same handcrafted cues.

\subsection{Comparison with Deep Learning Baselines}

To answer RQ2, we compare FRAME with recent deep learning methods. This question is important because modern baselines often rely on larger training sets and can learn complex visual features. At the same time, these methods may have higher training and inference cost and may not generalize uniformly across manipulation types. To make this comparison fair, we evaluate all methods on the same four external test sets and use official pretrained checkpoints whenever available. The deep learning baselines in our final benchmark are TruFor~\cite{guillaro2023trufor}, MMFusion~\cite{triaridis2024exploring}, ManTraNet~\cite{wu2019mantra}, and CAT-Net~\cite{kwon2021catnet}.

Table~\ref{tab:main_results} also answers RQ2. FRAME achieves the best detection AUC on CASIA v1, Coverage, and RealisticTampering, and it achieves the best localization scores on all three datasets with pixel-level masks. The strongest deep baseline is TruFor, which is close to FRAME on CASIA v1 and Coverage and remains the best method on Columbia, where it reaches 0.924 AUC and 0.891 accuracy. FRAME is slightly below TruFor on this detection-only benchmark, but it is stronger on the three datasets that also require localization. In particular, FRAME improves over TruFor by 0.017 in detection AUC on CASIA v1, 0.036 on Coverage, and 0.025 on RealisticTampering. The corresponding gains in localization are also consistent. On CASIA v1, FRAME improves over TruFor by 0.013 in F1 and 0.014 in mIoU. On Coverage, the gains are 0.021 in F1 and 0.016 in mIoU. On RealisticTampering, the gains are 0.028 in F1 and 0.016 in mIoU. These results suggest that the modular design of FRAME remains competitive with end-to-end learned detectors and is especially effective when localization quality matters. Additional qualitative comparisons are shown in Appendix Figure~\ref{fig:appendix_qualitative}.

All deep learning baselines are evaluated using their official pretrained checkpoints without fine-tuning on CASIA v2. This zero-shot protocol reflects a realistic deployment scenario and avoids confounds from architecture-specific fine-tuning differences. We note that these baselines were trained on substantially larger and more diverse datasets, whereas FRAME trains only its lightweight selector and fusion module (approximately 44{,}000 parameters) on the CASIA v2 split.

\begin{table*}[t]
\centering
\caption{Quantitative comparison on four test benchmarks. Columbia is detection-only. \textbf{Bold}: best per column. Subscripts: $\pm$ std over three evaluation runs.}
\label{tab:main_results}
\small
\setlength{\tabcolsep}{3pt}
\resizebox{\textwidth}{!}{
\begin{tabular}{lcccccccc}
\toprule
Method
  & \multicolumn{2}{c}{CASIA v1}
  & \multicolumn{2}{c}{Coverage}
  & \multicolumn{2}{c}{Columbia}
  & \multicolumn{2}{c}{RealisticTampering} \\
\cmidrule(lr){2-3}\cmidrule(lr){4-5}\cmidrule(lr){6-7}\cmidrule(lr){8-9}
  & Det.\ AUC & Loc.\ F1 / mIoU
  & Det.\ AUC & Loc.\ F1 / mIoU
  & Det.\ AUC & Acc.
  & Det.\ AUC & Loc.\ F1 / mIoU \\
\midrule
Best single pyIFD
  & $0.612_{\pm.004}$
  & $0.284_{\pm.005}$ / $0.198_{\pm.004}$
  & $0.624_{\pm.005}$
  & $0.318_{\pm.007}$ / $0.237_{\pm.005}$
  & $0.841_{\pm.005}$
  & $0.798_{\pm.006}$
  & $0.598_{\pm.005}$
  & $0.261_{\pm.006}$ / $0.181_{\pm.005}$ \\
Uniform-all pyIFD
  & $0.574_{\pm.002}$
  & $0.251_{\pm.003}$ / $0.172_{\pm.002}$
  & $0.587_{\pm.003}$
  & $0.281_{\pm.004}$ / $0.204_{\pm.003}$
  & $0.817_{\pm.003}$
  & $0.774_{\pm.004}$
  & $0.561_{\pm.003}$
  & $0.224_{\pm.004}$ / $0.152_{\pm.003}$ \\
Random-$K$ + uniform
  & $0.589_{\pm.013}$
  & $0.267_{\pm.016}$ / $0.184_{\pm.012}$
  & $0.601_{\pm.016}$
  & $0.294_{\pm.019}$ / $0.215_{\pm.014}$
  & $0.828_{\pm.015}$
  & $0.782_{\pm.017}$
  & $0.574_{\pm.014}$
  & $0.237_{\pm.017}$ / $0.161_{\pm.013}$ \\
Heuristic-$K$ + uniform
  & $0.628_{\pm.003}$
  & $0.302_{\pm.005}$ / $0.214_{\pm.004}$
  & $0.638_{\pm.004}$
  & $0.332_{\pm.006}$ / $0.248_{\pm.005}$
  & $0.856_{\pm.004}$
  & $0.813_{\pm.006}$
  & $0.612_{\pm.004}$
  & $0.274_{\pm.006}$ / $0.192_{\pm.005}$ \\
RF-Ensemble (pyIFD)
  & $0.658_{\pm.007}$
  & $0.328_{\pm.009}$ / $0.232_{\pm.007}$
  & $0.671_{\pm.008}$
  & $0.356_{\pm.011}$ / $0.268_{\pm.008}$
  & $0.868_{\pm.007}$
  & $0.826_{\pm.008}$
  & $0.643_{\pm.008}$
  & $0.301_{\pm.010}$ / $0.213_{\pm.008}$ \\
XGB-Ensemble (pyIFD)
  & $0.674_{\pm.006}$
  & $0.342_{\pm.008}$ / $0.243_{\pm.006}$
  & $0.683_{\pm.007}$
  & $0.371_{\pm.009}$ / $0.279_{\pm.007}$
  & $0.876_{\pm.006}$
  & $0.834_{\pm.007}$
  & $0.651_{\pm.006}$
  & $0.314_{\pm.008}$ / $0.224_{\pm.006}$ \\
\midrule
TruFor~\cite{guillaro2023trufor}
  & $0.724_{\pm.005}$
  & $0.408_{\pm.007}$ / $0.294_{\pm.006}$
  & $0.718_{\pm.006}$
  & $0.447_{\pm.008}$ / $0.338_{\pm.007}$
  & $\mathbf{0.924}_{\pm.004}$
  & $\mathbf{0.891}_{\pm.005}$
  & $0.687_{\pm.005}$
  & $0.364_{\pm.007}$ / $0.268_{\pm.006}$ \\
MMFusion~\cite{triaridis2024exploring}
  & $0.510_{\pm.006}$
  & $0.152_{\pm.008}$ / $0.104_{\pm.006}$
  & $0.541_{\pm.007}$
  & $0.213_{\pm.009}$ / $0.152_{\pm.007}$
  & $0.863_{\pm.005}$
  & $0.821_{\pm.007}$
  & $0.534_{\pm.006}$
  & $0.187_{\pm.008}$ / $0.128_{\pm.006}$ \\
ManTraNet~\cite{wu2019mantra}
  & $0.651_{\pm.004}$
  & $0.337_{\pm.006}$ / $0.241_{\pm.005}$
  & $0.663_{\pm.005}$
  & $0.378_{\pm.007}$ / $0.281_{\pm.006}$
  & $0.897_{\pm.005}$
  & $0.858_{\pm.006}$
  & $0.638_{\pm.005}$
  & $0.312_{\pm.007}$ / $0.224_{\pm.005}$ \\
CAT-Net~\cite{kwon2021catnet}
  & $0.678_{\pm.005}$
  & $0.368_{\pm.007}$ / $0.263_{\pm.005}$
  & $0.691_{\pm.006}$
  & $0.402_{\pm.008}$ / $0.304_{\pm.006}$
  & $0.916_{\pm.004}$
  & $0.882_{\pm.005}$
  & $0.661_{\pm.005}$
  & $0.338_{\pm.006}$ / $0.244_{\pm.005}$ \\
\midrule
\rowcolor{ourblue}
\textbf{FRAME (ours)}
  & $\mathbf{0.741}_{\pm.009}$
  & $\mathbf{0.421}_{\pm.012}$ / $\mathbf{0.308}_{\pm.010}$
  & $\mathbf{0.754}_{\pm.010}$
  & $\mathbf{0.468}_{\pm.014}$ / $\mathbf{0.354}_{\pm.011}$
  & $0.908_{\pm.009}$
  & $0.867_{\pm.011}$
  & $\mathbf{0.712}_{\pm.010}$
  & $\mathbf{0.392}_{\pm.013}$ / $\mathbf{0.284}_{\pm.010}$ \\
\bottomrule
\end{tabular}}
\end{table*}

\subsection{Ablation of Selection and Fusion}

To answer RQ3, we study which parts of FRAME are responsible for the final performance. The method contains two main ideas. The first idea is learned selection, which chooses candidate paths that fit the input image. The second idea is fusion, which combines the outputs of selected paths into a final response map. To isolate their effects, we compare variants that remove one component at a time. We include a top-1 variant that uses learned selection without fusion, a top-\(k\) variant with uniform fusion, a top-\(k\) variant with softmax-weighted fusion, and the full model with learned fusion. We also keep the simple handcrafted baselines in the table so that the contribution of each component can be interpreted relative to stronger non-learned alternatives.

Table~\ref{tab:ablation_selection_fusion} shows a consistent progression from fixed aggregation to learned selection and learned fusion. Top-1 selection already improves clearly over heuristic selection, which indicates that the selector can identify stronger paths than simple rules. On CASIA v1, top-1 selection improves over heuristic selection by 0.056 in detection AUC, 0.046 in F1, and 0.037 in mIoU. Adding uniform fusion on top of learned selection brings another gain of 0.028 in detection AUC, 0.039 in F1, and 0.030 in mIoU. Replacing uniform fusion with softmax fusion yields a further improvement of 0.012 in detection AUC, 0.016 in F1, and 0.012 in mIoU. The full model with learned fusion performs best and improves over softmax fusion by 0.017 in detection AUC, 0.018 in F1, and 0.015 in mIoU. The full model also improves strongly over the uniform-all pyIFD baseline, with gains of 0.167 in detection AUC, 0.170 in F1, and 0.136 in mIoU. These results support RQ3 and show that both learned selection and learned fusion contribute to the final performance.

\begin{table}[t]
\centering
\scriptsize
\setlength{\tabcolsep}{3.5pt}
\caption{Ablation on selection and fusion strategy evaluated on CASIA v1. Higher is better for all metrics.}
\label{tab:ablation_selection_fusion}
\resizebox{\columnwidth}{!}{%
\begin{tabular}{@{}lcccccc@{}}
\toprule
Method & $K$ & Selection & Fusion & Det.\ AUC & F1 & mIoU \\
\midrule
Uniform-all pyIFD & all & none      & uniform & $0.574$ & $0.251$ & $0.172$ \\
Random-$K$        & 10  & random    & uniform & $0.583$ & $0.259$ & $0.178$ \\
Random-$K$        & 50  & random    & uniform & $0.589$ & $0.267$ & $0.184$ \\
Heuristic-$K$     & 50  & heuristic & uniform & $0.628$ & $0.302$ & $0.214$ \\
\midrule
Top-1 selected              & 50 & learned & none    & $0.684$ & $0.348$ & $0.251$ \\
Top-$k$ + uniform fusion    & 50 & learned & uniform & $0.712$ & $0.387$ & $0.281$ \\
Top-$k$ + softmax fusion    & 50 & learned & softmax & $0.724$ & $0.403$ & $0.293$ \\
Top-$k$ + learned fusion (ours) & 50 & learned & learned & $\mathbf{0.741}$ & $\mathbf{0.421}$ & $\mathbf{0.308}$ \\
\bottomrule
\end{tabular}%
}
\vspace{-2mm}
\end{table}

\subsection{Hyperparameter Sensitivity}

To answer RQ4, we first study whether the method is stable under reasonable hyperparameter changes. We focus on the two main method-specific hyperparameters. The first hyperparameter is the number of sampled candidate paths, denoted by \(K\). The second hyperparameter is the number of selected paths that are fused, denoted by \(k\). These two variables control the balance between exploration, robustness, and efficiency. We vary \(K \in \{5,10,20,50,100\}\) while keeping all other settings fixed. We then fix \(K=50\) and vary \(k \in \{1,3,5,10\}\). This study is intentionally compact because our goal is to show stability rather than to present a large tuning benchmark.

Figure~\ref{fig:sensitivity_combined} summarizes the sensitivity study. The left panel shows the effect of the candidate path size \(K\). Performance improves steadily when \(K\) increases from 5 to 50. Detection AUC rises from 0.698 to 0.741, F1 rises from 0.371 to 0.421, and mIoU rises from 0.264 to 0.308. Increasing \(K\) from 50 to 100 does not bring further gains and leads to a small drop in all three accuracy metrics, while inference time nearly doubles from 9.23 to 18.47 seconds per image. The plateau beyond \(K=50\) suggests that the selector already sees enough diverse candidates at this scale and that additional sampling mainly adds redundancy. This trend suggests that a moderate candidate set is sufficient and that very large candidate pools increase cost more than they improve accuracy. The right panel shows the effect of the fusion cardinality \(k\). Moving from top-1 selection to top-3 and top-5 fusion gives a clear improvement, which indicates that combining several high-quality paths is better than relying on a single path. The best results appear at \(k=5\), where detection AUC reaches 0.741, F1 reaches 0.421, and mIoU reaches 0.308. Increasing to \(k=10\) slightly reduces all three metrics, which suggests that larger fusion sets start to include lower-quality paths. This pattern is also desirable from a practical perspective because it shows that the best operating point does not require overly large fusion sets. A detailed computation cost comparison, which complements this sensitivity study, is reported in Appendix Table~\ref{tab:appendix_compute_cost}.

\begin{figure}[t]
    \centering
    \includegraphics[width=\linewidth]{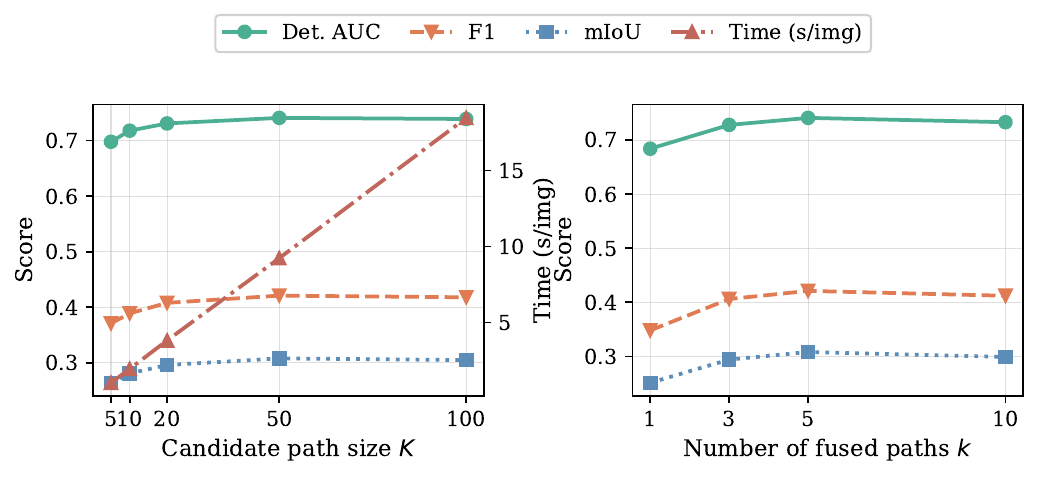}
    \caption{Sensitivity analysis on CASIA v1. The left panel shows the effect of the candidate path size \(K\) with \(k=5\) fixed. Performance improves from \(K=5\) to \(K=50\) and then saturates, while runtime grows steadily. The right panel shows the effect of the number of fused paths \(k\) with \(K=50\) fixed. Moderate fusion performs best, and the peak appears at \(k=5\).}
    \label{fig:sensitivity_combined}
\end{figure}
\section{Limitations and Future Work}
The forensic algorithms in the current module pool are designed to detect artifacts from conventional editing operations such as splicing, copy-move forgery, and JPEG recompression. These methods exploit traces tied to the camera imaging pipeline, which AI-generated or diffusion-inpainted content may not exhibit. However, because FRAME operates over a modular and extensible pool rather than a fixed architecture, it can in principle accommodate forensic modules targeting generative artifacts~\cite{bammey2023synthbuster,frank2020leveraging}.

\section{Conclusion}
We present FRAME, an adaptive framework that organizes diverse forensic algorithms into candidate analysis paths, selects informative paths for each input image, and fuses their outputs into a unified prediction. Our experiments show that learned path selection and fusion consistently outperform both fixed-combination and single-algorithm baselines, demonstrating that adaptive multi-path evidence integration is a promising direction for image forensic analysis.

{\small
\bibliographystyle{ieeenat_fullname}
\bibliography{main}
}

\appendix

\newpage
\section*{Appendix}
\addcontentsline{toc}{section}{Appendix}

\startcontents[sections]
\printcontents[sections]{l}{1}{\setcounter{tocdepth}{2}}

\section{Proofs of Theoretical Guarantees}
\label{sec:appendix-theory}

\subsection{Auxiliary Lemmas}
\label{sec:app-lemmas}

\begin{lemma}[Dominance of the candidate-set oracle]
\label{lem:cand-oracle-dominance}
Under Assumption~\ref{assump:candidate}, for any $X\in\mathcal{X}$ let
\[
P^\star_{\mathcal{C}}(X)\in\arg\max_{P\in\mathcal{C}(X)} \mu(P,X).
\]
Then, for any $P\in\mathcal{C}(X)$,
\[
\mu(P^\star_{\mathcal{C}}(X),X)\ \ge\ \mu(P,X).
\]
\end{lemma}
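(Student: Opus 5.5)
The claim follows directly from the definition of the candidate-set oracle in \eqref{eq:cand-oracle}. The plan is to first check that the argmax is well defined, then read off the inequality.

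\textbf{Step 1: the maximum exists.} Fix $X\in\mathcal{X}$. By Assumption~\ref{assump:candidate}, $\mathcal{C}(X)$ is finite and non-empty. Hence the set of values $\{\mu(P,X):P\in\mathcal{C}(X)\}\subseteq[0,1]$ is a finite non-empty set of reals. (Assumption~\ref{assump:bounded} gives the range $[0,1]$, but finiteness is all we need.) Such a set has a maximum element $\mu_{(1)}(X)$. Therefore $\arg\max_{P\in\mathcal{C}(X)}\mu(P,X)$ is non-empty, and the fixed tie-breaking rule picks a well-defined element $P^\star_{\mathcal{C}}(X)$ with $P^\star_{\mathcal{C}}(X)\in\mathcal{C}(X)$.

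\textbf{Step 2: unpack the argmax.} By definition of the argmax, $P^\star_{\mathcal{C}}(X)\in\arg\max_{P\in\mathcal{C}(X)}\mu(P,X)$ means
\[
\mu(P^\star_{\mathcal{C}}(X),X)=\max_{P'\in\mathcal{C}(X)}\mu(P',X).
\]
The maximum of a finite set is at least each of its elements. So for every $P\in\mathcal{C}(X)$ we get $\mu(P^\star_{\mathcal{C}}(X),X)\ge\mu(P,X)$. Since $X$ was arbitrary, the lemma follows.

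\textbf{Step 3: consequence for the baselines.} By Assumption~\ref{assump:candidate}, every $P^{(i)}$ and $P^{(\mathrm{avg})}$ lies in $\mathcal{C}(X)$, so the lemma applies to these baselines pointwise in $X$. This is how it will be used later, together with the accuracy event of Assumption~\ref{assump:uniform}.

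There is no real obstacle. The only point needing care is Step 1: without finiteness and non-emptiness of $\mathcal{C}(X)$, the argmax could be empty (a supremum that is not attained), and $P^\star_{\mathcal{C}}(X)$ would be undefined.
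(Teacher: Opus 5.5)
Your proof is correct and matches the paper's argument essentially verbatim: Assumption~\ref{assump:candidate} makes $\mathcal{C}(X)$ finite and non-empty, so the maximum is attained, and the inequality then follows directly from the definition of $\arg\max$. Your Step~3 remark about the baselines is exactly how the paper later applies the lemma in the proofs of Theorems~\ref{thm:strict-avg} and~\ref{thm:strict-bs}.
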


\begin{proof}
Fix any $X$. By Assumption~\ref{assump:candidate}, $\mathcal{C}(X)$ is finite and non-empty, hence
$\max_{P\in\mathcal{C}(X)} \mu(P,X)$ exists and is attained.
By definition of $\arg\max$, $P^\star_{\mathcal{C}}(X)$ attains this maximum, so
\[
\mu(P^\star_{\mathcal{C}}(X),X)=\max_{P\in\mathcal{C}(X)} \mu(P,X).
\]
Therefore, for any $P\in\mathcal{C}(X)$,
\[
\mu(P^\star_{\mathcal{C}}(X),X)\ge \mu(P,X).
\]
\end{proof}

\begin{lemma}[Approximate maximization under uniform score error]
\label{lem:approx-argmax}
Assume Assumption~\ref{assump:candidate} and that \eqref{eq:hp-uniform} holds for some $\varepsilon$.
Fix any $X\in\mathcal{X}$ and let $\widehat{P}(X)\in\arg\max_{P\in\mathcal{C}(X)} f_\theta(P,X),\
P^\star_{\mathcal{C}}(X)\in\arg\max_{P\in\mathcal{C}(X)} \mu(P,X).$
Then
\[
\mu(\widehat{P}(X),X)\ \ge\ \mu(P^\star_{\mathcal{C}}(X),X)\ -\ 2\varepsilon .
\]
\end{lemma}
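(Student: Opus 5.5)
The plan is the standard three-term sandwich argument, carried out pointwise for the fixed context $X$. Throughout, write $\widehat{P}=\widehat{P}(X)$ and $P^\star=P^\star_{\mathcal{C}}(X)$. By Assumption~\ref{assump:candidate}, $\mathcal{C}(X)$ is finite and non-empty, so both arg-max sets are non-empty. Hence $\widehat{P}$ and $P^\star$ are well-defined elements of $\mathcal{C}(X)$, with ties resolved by the fixed tie-breaking rule. Since both belong to $\mathcal{C}(X)$, the uniform bound \eqref{eq:hp-uniform} applies to each of them at this $X$.

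The key steps, in order, are as follows.
\begin{itemize}
\item First, apply the lower side of \eqref{eq:hp-uniform} to $\widehat{P}$: $\mu(\widehat{P},X)\ge f_\theta(\widehat{P},X)-\varepsilon$.
\item Second, use the defining property of $\widehat{P}$ as a maximizer of $f_\theta(\cdot,X)$ over $\mathcal{C}(X)$. Since $P^\star\in\mathcal{C}(X)$, this gives $f_\theta(\widehat{P},X)\ge f_\theta(P^\star,X)$.
\item Third, apply the upper side of \eqref{eq:hp-uniform} to $P^\star$: $f_\theta(P^\star,X)\ge \mu(P^\star,X)-\varepsilon$.
\end{itemize}
Chaining the three inequalities gives
\[
\mu(\widehat{P},X)\ \ge\ f_\theta(\widehat{P},X)-\varepsilon\ \ge\ f_\theta(P^\star,X)-\varepsilon\ \ge\ \mu(P^\star,X)-2\varepsilon,
\]
which is exactly the claim.

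There is no real obstacle here. The only point needing care is that the error bound is uniform over $P\in\mathcal{C}(X)$, and this is why both the selected path and the oracle path can be controlled at once. One might also note that Lemma~\ref{lem:cand-oracle-dominance} gives the trivial complementary bound $\mu(\widehat{P},X)\le\mu(P^\star,X)$, although it is not needed for the stated inequality.
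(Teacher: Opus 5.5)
Your proposal is correct and follows essentially the same argument as the paper: maximality of $\widehat{P}$ for $f_\theta$ gives $f_\theta(\widehat{P},X)\ge f_\theta(P^\star,X)$, and the two-sided uniform error bound at $\widehat{P}$ and at $P^\star$ then yields $\mu(\widehat{P},X)\ge\mu(P^\star,X)-2\varepsilon$. Your explicit remark that Assumption~\ref{assump:candidate} guarantees non-empty arg-max sets is a small clarification that the paper's proof leaves implicit.
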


\begin{proof}
Fix any $X$ and abbreviate $\widehat{P}=\widehat{P}(X)$ and $P^\star=P^\star_{\mathcal{C}}(X)$.
By optimality of $\widehat{P}$ for $f_\theta$,
\[
f_\theta(\widehat{P},X)\ge f_\theta(P^\star,X).
\]
If \eqref{eq:hp-uniform} holds, then $\mu(\widehat{P},X)\ge f_\theta(\widehat{P},X)-\varepsilon
\qquad\text{and}\qquad
f_\theta(P^\star,X)\ge \mu(P^\star,X)-\varepsilon.$
Combining the three inequalities yields
\[
\mu(\widehat{P},X)\ge \mu(P^\star,X)-2\varepsilon.
\]
\end{proof}

\begin{lemma}[Exact selection when the candidate-set gap is large]
\label{lem:gap-implies-correct}
Assume Assumption~\ref{assump:candidate} and that \eqref{eq:hp-uniform} holds for some $\varepsilon$.
Let $g(X)$ be defined in \eqref{eq:gap-def}.
If $g(X)>2\varepsilon$, then $\widehat{P}(X)$ is a maximizer of $\mu(\cdot,X)$ over $\mathcal{C}(X)$, i.e.,
\[
\mu(\widehat{P}(X),X)\ =\ \mu_{(1)}(X).
\]
\end{lemma}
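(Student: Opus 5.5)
The plan is a proof by contradiction that uses only the uniform accuracy bound \eqref{eq:hp-uniform} and the definition of the gap \eqref{eq:gap-def}. Fix $X$ with $g(X)>2\varepsilon$. By Assumption~\ref{assump:candidate}, $\mathcal{C}(X)$ is finite and non-empty, so $\mu_{(1)}(X)$ is attained by some $P^\star=P^\star_{\mathcal{C}}(X)$, and $\widehat{P}=\widehat{P}(X)$ is well defined.

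Suppose, for contradiction, that $\mu(\widehat{P},X)<\mu_{(1)}(X)$. We want to conclude that then $\mu(\widehat{P},X)\le\mu_{(2)}(X)$. The reason is that $\mu_{(2)}(X)$ is the second-largest entry of the multiset $\{\mu(P,X):P\in\mathcal{C}(X)\}$, counted with ties. Removing the single entry $\mu(P^\star,X)=\mu_{(1)}(X)$ leaves a multiset whose maximum is $\mu_{(2)}(X)$. Since $\widehat{P}\neq P^\star$ (their $\mu$-values differ), $\mu(\widehat{P},X)$ lies in this remaining multiset. Hence
\[
\mu(\widehat{P},X)\le \mu_{(2)}(X)=\mu_{(1)}(X)-g(X)<\mu(P^\star,X)-2\varepsilon .
\]

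Lemma~\ref{lem:approx-argmax} gives $\mu(\widehat{P},X)\ge \mu(P^\star,X)-2\varepsilon$, which contradicts the strict inequality just obtained. Equivalently, one can chain the inequalities directly:
\[
f_\theta(\widehat{P},X)\le \mu(\widehat{P},X)+\varepsilon<\mu(P^\star,X)-\varepsilon\le f_\theta(P^\star,X).
\]
This contradicts the choice of $\widehat{P}$ as a maximizer of $f_\theta(\cdot,X)$ over $\mathcal{C}(X)$. Therefore $\mu(\widehat{P},X)=\mu_{(1)}(X)$, so $\widehat{P}(X)$ maximizes $\mu(\cdot,X)$ over $\mathcal{C}(X)$.

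The only delicate step is the claim $\mu(\widehat{P},X)\le\mu_{(2)}(X)$ whenever $\widehat{P}$ is not $\mu$-optimal. This needs the ordered-statistics reading of $\mu_{(1)}\ge\mu_{(2)}$, with ties allowed. Two edge cases should be addressed in a remark. If $|\mathcal{C}(X)|=1$, then $\widehat{P}=P^\star$ trivially; Assumption~\ref{assump:candidate} in fact guarantees $|\mathcal{C}(X)|\ge 2$ whenever $N\ge1$, since $P^{(1)}$ and $P^{(\mathrm{avg})}$ are both included. If $\mu_{(1)}(X)$ is attained by several paths, then $g(X)=0$, so the hypothesis $g(X)>2\varepsilon$ fails and the case does not arise.
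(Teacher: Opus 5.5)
Your proof is correct and follows essentially the same route as the paper: both compare $f_\theta(P^\star,X)\ge\mu_{(1)}(X)-\varepsilon$ with $f_\theta(P,X)\le\mu_{(2)}(X)+\varepsilon$ for any $\mu$-suboptimal $P$, and your contradiction framing plus the order-statistic justification that a non-optimal path satisfies $\mu(P,X)\le\mu_{(2)}(X)$ simply spell out a step the paper leaves implicit. One harmless slip in your side remark: $|\mathcal{C}(X)|\ge 2$ needs $N\ge 2$, since for $N=1$ the paths $P^{(1)}$ and $P^{(\mathrm{avg})}$ coincide, but your treatment of the singleton case already covers this.
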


\begin{proof}
Fix any $X$ and let $P_{(1)}\in\arg\max_{P\in\mathcal{C}(X)}\mu(P,X)$ be a maximizer.
For any $P$ with $\mu(P,X)\le \mu_{(2)}(X)$, \eqref{eq:hp-uniform} implies $f_\theta(P_{(1)},X)\ge \mu_{(1)}(X)-\varepsilon
\qquad\text{and}\qquad
f_\theta(P,X)\le \mu(P,X)+\varepsilon \le \mu_{(2)}(X)+\varepsilon.$
If $g(X)>2\varepsilon$, then
\[
\mu_{(1)}(X)-\varepsilon>\mu_{(2)}(X)+\varepsilon,
\]
hence
\[
f_\theta(P_{(1)},X)>f_\theta(P,X)
\]
for all such $P$.
Thus any maximizer of $f_\theta(\cdot,X)$ must also be a maximizer of $\mu(\cdot,X)$, and therefore
\[
\mu(\widehat{P}(X),X)=\mu_{(1)}(X).
\]
\end{proof}

\begin{lemma}[Expected regret bound under Tsybakov/no-tie]
\label{lem:expected-regret}
Assume Assumptions~\ref{assump:candidate} and~\ref{assump:tsybakov}, and that \eqref{eq:hp-uniform} holds for some $\varepsilon$.
Let
\[
r(X)\triangleq \mu(P^\star_{\mathcal{C}}(X),X)-\mu(\widehat{P}(X),X)\ \ge\ 0.
\]
Then
\[
\mathbb{E}_{X\sim\mathcal{D}_X}[r(X)]\ \le\ C_\alpha\,\varepsilon^{\alpha+1},
\quad\text{where } C_\alpha=2^{\alpha+1}c_0.
\]
\end{lemma}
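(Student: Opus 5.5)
We bound $r(X)$ pointwise by splitting on the candidate-set gap and then integrate using the Tsybakov condition.

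\textbf{Pointwise bounds.} Fix $X$ and work on the event where \eqref{eq:hp-uniform} holds.

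On $\{g(X)>2\varepsilon\}$, Lemma~\ref{lem:gap-implies-correct} gives $\mu(\widehat{P}(X),X)=\mu_{(1)}(X)=\mu(P^\star_{\mathcal{C}}(X),X)$, so $r(X)=0$.

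On $\{g(X)\le 2\varepsilon\}$, Lemma~\ref{lem:approx-argmax} gives $r(X)\le 2\varepsilon$. Nonnegativity of $r$ follows from Lemma~\ref{lem:cand-oracle-dominance}, since $\widehat{P}(X)\in\mathcal{C}(X)$.

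Together these yield the pointwise inequality
\[
0\le r(X)\le 2\varepsilon\,\mathbf{1}\{g(X)\le 2\varepsilon\}.
\]

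\textbf{Integration.} Taking expectations over $X\sim\mathcal{D}_X$,
\[
\mathbb{E}[r(X)]\le 2\varepsilon\,\mathbb{P}\big(g(X)\le 2\varepsilon\big).
\]
If $2\varepsilon\le 1$, Assumption~\ref{assump:tsybakov} applies with $t=2\varepsilon$. The probability is then at most $c_0(2\varepsilon)^\alpha$, so
\[
\mathbb{E}[r(X)]\le 2^{\alpha+1}c_0\,\varepsilon^{\alpha+1}=C_\alpha\varepsilon^{\alpha+1}.
\]

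\textbf{The case $2\varepsilon>1$.} Here the Tsybakov bound cannot be invoked at $t=2\varepsilon$, which lies outside $[0,1]$. Instead use Assumption~\ref{assump:bounded}: since $\mu\in[0,1]$, we have $r(X)\le 1$, hence $\mathbb{E}[r(X)]\le 1$.

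Applying Assumption~\ref{assump:tsybakov} at $t=1$ gives $1=\mathbb{P}(g(X)\le 1)\le c_0$, because $g\in[0,1]$ always. Therefore
\[
\mathbb{E}[r(X)]\le 1\le c_0<c_0(2\varepsilon)^{\alpha+1}=C_\alpha\varepsilon^{\alpha+1}.
\]

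**Measurability.** This is routine: $r$ and $g$ are measurable by the fixed measurable tie-breaking rule and the finiteness of $\mathcal{C}(X)$. The only real care in the argument is the domain restriction $t\in[0,1]$ handled above.
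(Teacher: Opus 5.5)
Your proof is correct and follows the paper's argument: the pointwise bound $r(X)\le 2\varepsilon\,\mathbf{1}\{g(X)\le 2\varepsilon\}$ from Lemmas~\ref{lem:approx-argmax} and~\ref{lem:gap-implies-correct}, then Assumption~\ref{assump:tsybakov} at $t=2\varepsilon$. Your separate treatment of $2\varepsilon>1$, deducing $c_0\ge 1$ from $\mathbb{P}(g(X)\le 1)=1$, is valid and closes a domain issue ($t\in[0,1]$) that the paper's proof silently skips.
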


\begin{proof}
By Lemma~\ref{lem:approx-argmax}, $r(X)\le 2\varepsilon$ for all $X$.
By Lemma~\ref{lem:gap-implies-correct}, $r(X)=0$ whenever $g(X)>2\varepsilon$.
Hence
\[
r(X)\le 2\varepsilon\,\mathbf{1}\{g(X)\le 2\varepsilon\}.
\]
Taking expectation and using Assumption~\ref{assump:tsybakov} with $t=2\varepsilon$ yields
\[
\mathbb{E}[r(X)]
\le 2\varepsilon\cdot \mathbb{P}(g(X)\le 2\varepsilon)
\le 2\varepsilon\cdot c_0(2\varepsilon)^\alpha
= C_\alpha\varepsilon^{\alpha+1}.
\]
\end{proof}

\subsection{Proof of Theorem~\ref{thm:strict-avg}}
\label{sec:app-proof-thm-avg}

\begin{proof}
Let $\mathcal{E}_\delta$ denote the high-probability event in Assumption~\ref{assump:uniform}, i.e.,
\[
\mathcal{E}_\delta
\triangleq
\left\{
\sup_{X\in\mathcal{X}}\sup_{P\in\mathcal{C}(X)}
\big|f_\theta(P,X)-\mu(P,X)\big|
\le \varepsilon(N_{\mathrm{tr}},\delta)
\right\}.
\]
By Assumption~\ref{assump:uniform}, $\mathbb{P}(\mathcal{E}_\delta)\ge 1-\delta$ over the training data. We prove the desired inequality on the event $\mathcal{E}_\delta$.

Throughout the proof, write $\varepsilon=\varepsilon(N_{\mathrm{tr}},\delta)$ and let
\[
r(X)=\mu(P^\star_{\mathcal{C}}(X),X)-\mu(\widehat{P}(X),X).
\]
Then
\begin{equation}
\label{eq:avg-decompose}
\mathbb{E}[\mu(\widehat{P}(X),X)]
\ =\ \mathbb{E}[\mu(P^\star_{\mathcal{C}}(X),X)]\ -\ \mathbb{E}[r(X)].
\end{equation}
By Lemma~\ref{lem:expected-regret}, on the event $\mathcal{E}_\delta$,
\[
\mathbb{E}[r(X)]\le C_\alpha\varepsilon^{\alpha+1}.
\]

By Assumption~\ref{assump:candidate}, $P^{(\mathrm{avg})}\in\mathcal{C}(X)$ for all $X$, hence
Lemma~\ref{lem:cand-oracle-dominance} implies
\[
\mu(P^\star_{\mathcal{C}}(X),X)\ge \mu(P^{(\mathrm{avg})},X)
\qquad\text{for all }X.
\]
Combining this with Assumption~\ref{assump:avg-subopt} yields
\[
\mu(P^\star_{\mathcal{C}}(X),X)-\mu(P^{(\mathrm{avg})},X)
\ \ge\ \gamma_{\mathrm{avg}}\cdot \mathbf{1}\{X\in\mathcal{S}_{\mathrm{avg}}\}.
\]
Taking expectation and using $\mathbb{P}(X\in\mathcal{S}_{\mathrm{avg}})\ge p_{\mathrm{avg}}$ gives
\[
\mathbb{E}[\mu(P^\star_{\mathcal{C}}(X),X)]
\ \ge\ \mathbb{E}[\mu(P^{(\mathrm{avg})},X)]\ +\ p_{\mathrm{avg}}\gamma_{\mathrm{avg}}.
\]
Substituting into \eqref{eq:avg-decompose} yields
\[
\mathbb{E}[\mu(\widehat{P}(X),X)]
\ \ge\ \mathbb{E}[\mu(P^{(\mathrm{avg})},X)]\ +\ p_{\mathrm{avg}}\gamma_{\mathrm{avg}}\ -\ C_\alpha\varepsilon^{\alpha+1}.
\]
Under condition \eqref{eq:cond-strict-avg}, the right-hand side is strictly larger than
$\mathbb{E}[\mu(P^{(\mathrm{avg})},X)]$. Therefore, on $\mathcal{E}_\delta$,
\[
\mathbb{E}_{X\sim\mathcal{D}_X}[\mu(\widehat{P}(X),X)]
>
\mathbb{E}_{X\sim\mathcal{D}_X}[\mu(P^{(\mathrm{avg})},X)].
\]
Since $\mathbb{P}(\mathcal{E}_\delta)\ge 1-\delta$, the theorem follows.
\end{proof}

\subsection{Proof of Theorem~\ref{thm:strict-bs}}
\label{sec:app-proof-thm-bs}

\begin{proof}
Let $\mathcal{E}_\delta$ denote the high-probability event in Assumption~\ref{assump:uniform}, i.e.,
\[
\mathcal{E}_\delta
\triangleq
\left\{
\sup_{X\in\mathcal{X}}\sup_{P\in\mathcal{C}(X)}
\big|f_\theta(P,X)-\mu(P,X)\big|
\le \varepsilon(N_{\mathrm{tr}},\delta)
\right\}.
\]
By Assumption~\ref{assump:uniform}, $\mathbb{P}(\mathcal{E}_\delta)\ge 1-\delta$ over the training data. We prove the desired inequality on the event $\mathcal{E}_\delta$.

Throughout the proof, write $\varepsilon=\varepsilon(N_{\mathrm{tr}},\delta)$ and let
\[
r(X)=\mu(P^\star_{\mathcal{C}}(X),X)-\mu(\widehat{P}(X),X).
\]
As in \eqref{eq:avg-decompose},
\begin{equation}
\label{eq:bs-decompose}
\mathbb{E}[\mu(\widehat{P}(X),X)]
\ =\ \mathbb{E}[\mu(P^\star_{\mathcal{C}}(X),X)]\ -\ \mathbb{E}[r(X)].
\end{equation}
By Lemma~\ref{lem:expected-regret}, on the event $\mathcal{E}_\delta$,
\[
\mathbb{E}[r(X)]\le C_\alpha\varepsilon^{\alpha+1}.
\]

By definition of $i^\dagger$ in Assumption~\ref{assump:single-subopt},
\[
\mathbb{E}[\mu(P^{(\mathrm{bs})},X)]\ =\ \max_{i\in[N]}\mathbb{E}[\mu(P^{(i)},X)].
\]
By Assumption~\ref{assump:candidate}, $P^{(\mathrm{bs})}\in\mathcal{C}(X)$ for all $X$, so
Lemma~\ref{lem:cand-oracle-dominance} yields
\[
\mu(P^\star_{\mathcal{C}}(X),X)\ge \mu(P^{(\mathrm{bs})},X)
\qquad\text{for all }X.
\]
Combining this with Assumption~\ref{assump:single-subopt} gives
\[
\mu(P^\star_{\mathcal{C}}(X),X)-\mu(P^{(\mathrm{bs})},X)
\ \ge\ \gamma_{\mathrm{bs}}\cdot \mathbf{1}\{X\in\mathcal{S}_{\mathrm{bs}}\}.
\]
Taking expectation and using $\mathbb{P}(X\in\mathcal{S}_{\mathrm{bs}})\ge p_{\mathrm{bs}}$ yields
\[
\mathbb{E}[\mu(P^\star_{\mathcal{C}}(X),X)]
\ \ge\ \mathbb{E}[\mu(P^{(\mathrm{bs})},X)]\ +\ p_{\mathrm{bs}}\gamma_{\mathrm{bs}}.
\]
Substituting into \eqref{eq:bs-decompose} yields
\[
\mathbb{E}[\mu(\widehat{P}(X),X)]
\ \ge\ \mathbb{E}[\mu(P^{(\mathrm{bs})},X)]\ +\ p_{\mathrm{bs}}\gamma_{\mathrm{bs}}\ -\ C_\alpha\varepsilon^{\alpha+1}.
\]
Under condition \eqref{eq:cond-strict-bs}, the right-hand side is strictly larger than
$\mathbb{E}[\mu(P^{(\mathrm{bs})},X)]$; since
\[
\mathbb{E}[\mu(P^{(\mathrm{bs})},X)]=\max_{i\in[N]}\mathbb{E}[\mu(P^{(i)},X)],
\]
the claim follows on the event $\mathcal{E}_\delta$. Since $\mathbb{P}(\mathcal{E}_\delta)\ge 1-\delta$, the theorem follows.
\end{proof}

\section{Additional Experimental Details and Results}
\label{sec:appendix_experiments}

This appendix complements Section~\ref{sec:experiments}. Since the main quantitative benchmark table is reported in the main text, this appendix focuses on the dataset protocol, the full implementation details, the computation cost comparison, and additional qualitative examples. Appendix Table~\ref{tab:appendix_dataset_protocol} summarizes the dataset protocol. Appendix Table~\ref{tab:appendix_config} reports the complete configuration of FRAME. Appendix Table~\ref{tab:appendix_compute_cost} reports the resource comparison that complements RQ4. Appendix Figure~\ref{fig:appendix_qualitative} provides additional qualitative comparisons across representative baselines and FRAME.

\subsection{Implementation Details}

We train the learned components of FRAME exclusively on the CASIA v2 dataset, which provides 8{,}831 training images and 1{,}918 validation images. No test-set images are used during training or model selection. All evaluation is performed on four held-out benchmarks: CASIA v1, Coverage, Columbia, and RealisticTampering. Columbia is used only for image-level detection, since we do not use pixel-level localization masks for that dataset under our protocol. This strict train-test separation ensures that all reported results measure generalization to unseen data distributions rather than reuse of benchmark-specific statistics. Table~\ref{tab:appendix_dataset_protocol} summarizes the dataset protocol.

\begin{table}[t]
\centering
\caption{Dataset protocol used in all experiments. The selector and fusion module are trained only on CASIA v2. All methods are evaluated on the same four test sets.}
\label{tab:appendix_dataset_protocol}
\small
\setlength{\tabcolsep}{5pt}
\resizebox{\columnwidth}{!}{%
\begin{tabular}{lccc}
\toprule
Role & Dataset & Size & Usage \\
\midrule
Train & CASIA v2 train & 8{,}831 & Train selector and fusion \\
Val & CASIA v2 val & 1{,}918 & Validation and early stopping \\
Test & CASIA v1 & 1{,}754 & Main benchmark \\
Test & Coverage & 200 & Copy-move benchmark \\
Test & Columbia & 363 & Detection only \\
Test & RealisticTampering & 440 & Realistic splicing benchmark \\
\bottomrule
\end{tabular}%
}
\end{table}

FRAME operates over a fixed pool of handcrafted forensic algorithms from pyIFD~\cite{vidalmata2023effectiveness}. The core pool contains nine modules that cover JPEG compression cues, sensor noise patterns, and camera-related artifacts. We also include six optional modules in the main experiments, which gives a total pool of fifteen modules. All module outputs are normalized heatmaps in $[0,1]$. These outputs are precomputed offline and cached to disk before any model training or evaluation. This design ensures that the learned components operate on fixed forensic evidence and do not repeatedly execute the handcrafted algorithms during training.

A path is an ordered sequence of one to four modules drawn from the module pool. For each image, we sample $K=50$ candidate paths, with up to $\max(10K,100)=600$ attempts to avoid duplicates. Module outputs within a path are fused by uniform averaging to produce a path-level heatmap. The GNN selector then scores all sampled paths, and the top-$k=5$ paths are retained for final fusion. The selector is a lightweight GraphSAGE-style network~\cite{hamilton2017inductive} with three message-passing layers, each with hidden dimension 64. Each module is represented as a graph node with a learnable 64-dimensional embedding. Mean pooling over node embeddings is concatenated with a 9-dimensional image feature vector and a 5-dimensional manipulation-type one-hot vector, and the resulting representation is processed by a small MLP to produce a scalar path-quality score. The selector itself is lightweight and contains approximately 44{,}000 parameters.

The image-level features include log-scaled height and width, mean and standard deviation of grayscale intensity, Shannon entropy, Canny edge density, saturation ratio, and binary indicators for JPEG and PNG formats. The selector is trained by regression, where the target is the pixel-level F1 score of the path-level heatmap against the ground-truth mask. We use Adam with learning rate $10^{-3}$, weight decay $10^{-4}$, batch size 128, and 15 training epochs, and we retain the checkpoint with the lowest validation loss. The fusion module learns a scalar weight for each selected path. These weights are optimized using a combined BCE and Dice loss with equal coefficients. During fusion training, images are downsampled to a maximum side length of 384 pixels. At inference time, the top-$k$ selected heatmaps are combined by learned softmax weighting with temperature $\tau=1.0$. The final image-level detection score is the maximum pixel value of the fused heatmap. Pixel-level localization is computed by thresholding the fused map at 0.5. Table~\ref{tab:appendix_config} summarizes the full configuration.


\begin{table*}[t]
\centering
\caption{Complete experimental configuration of FRAME.}
\label{tab:appendix_config}
\footnotesize
\setlength{\tabcolsep}{4pt}
\renewcommand{\arraystretch}{1.08}
\begin{tabularx}{\textwidth}{@{}
>{\raggedright\arraybackslash}m{2.15cm}
>{\centering\arraybackslash}m{3.00cm}
>{\centering\arraybackslash}m{3.10cm}
>{\raggedright\arraybackslash}X@{}}
\toprule
\textbf{Category} & \textbf{Parameter} & \textbf{Value} & \textbf{Description} \\
\midrule
\multirow{5}{*}{\textbf{Dataset}}
  & Train set             & CASIA v2 & 8{,}831 train / 1{,}918 val images \\
  & Test sets             & \shortstack[c]{CASIA v1, Coverage,\\ Columbia,\\ RealisticTampering} & External benchmarks only \\
  & Mask binarization thr & 0.5 & GT and predicted mask threshold \\
  & Detection score       & max pixel & Max of fused heatmap \\
  & Random seed           & 0 & All sampling and evaluation \\
\midrule
\multirow{4}{*}{\textbf{pyIFD Pool}}
  & Core modules     & 9  & ELA, DCT, NOI1--5, GHOST, BLK, CAGI \\
  & Optional modules & 6  & ADQ1--3, NADQ, CFA1, CAGI\_INV \\
  & Total pool       & 15 & All used in main experiments \\
  & Module output    & heatmap $\in [0,1]$ & Per-pixel anomaly score \\
\midrule
\multirow{5}{*}{\textbf{Path Sampling}}
  & Candidate paths $K$   & 50 & Default; varied in sensitivity study \\
  & Min / max path length & 1 / 4 & Modules per path \\
  & Max graph nodes       & 8 & Supernet nodes per sample \\
  & Max sampling attempts & $\max(10K,\,100)$ & 600 for $K{=}50$ \\
  & Intra-path fusion     & uniform & Equal weight within a path \\
\midrule
\multirow{7}{*}{\textbf{GNN Selector}}
  & Architecture         & \shortstack[c]{GraphSAGE,\\ 3 layers} & Message-passing GNN \\
  & Hidden dimension     & 64 & All GNN layers \\
  & Module embedding dim & 64 & Learnable module ID embedding \\
  & Image feature dim    & 9 & Hand-crafted features \\
  & Manip.\ type dim     & 5 & One-hot encoded \\
  & MLP head             & \shortstack[c]{$78 \to 128 \to$\\ $64 \to 1$} & ReLU + sigmoid output \\
  & Selector parameters  & $\approx$44{,}000 & Checkpoint $<$ 0.2\,MB \\
\midrule
\multirow{5}{*}{\textbf{Image Features}}
  & Spatial   & \shortstack[c]{$\log(1{+}H),$\\ $\log(1{+}W)$} & Log-scaled resolution \\
  & Intensity & \shortstack[c]{mean / 255,\\ std / 255} & Grayscale statistics \\
  & Texture   & \shortstack[c]{entropy / 8,\\ edge density} & Canny (100 / 200) \\
  & Clipping  & saturation ratio & Pixels ${\leq}2$ or ${\geq}253$ \\
  & Format    & is\_JPEG, is\_PNG & Binary codec flags \\
\midrule
\multirow{6}{*}{\textbf{GNN Training}}
  & Optimizer         & \shortstack[c]{Adam\\ ($\beta_1{=}0.9,\,\beta_2{=}0.999$)} & --- \\
  & Learning rate     & $10^{-3}$ & Fixed schedule \\
  & Weight decay      & $10^{-4}$ & L2 regularization \\
  & Batch size        & 128 paths & --- \\
  & Epochs            & 15 & Best val-loss checkpoint saved \\
  & Gradient clipping & 5.0 (global norm) & --- \\
\midrule
\multirow{5}{*}{\textbf{Fusion Training}}
  & Optimizer     & Adam & Same betas as GNN \\
  & Learning rate & $10^{-2}$ & Fixed schedule \\
  & Weight decay  & $10^{-4}$ & --- \\
  & Epochs        & 10 & Fixed schedule \\
  & Loss          & \shortstack[c]{BCE + Dice,\\ $\lambda{=}1.0$ each} & Dice $\epsilon{=}10^{-6}$ \\
\midrule
\multirow{4}{*}{\textbf{Inference}}
  & Fused paths $k$        & 5 & Top-$k$ from GNN; varied in study \\
  & Fusion strategy        & learned weights & Trained on CASIA v2 \\
  & Softmax temperature    & $\tau{=}1.0$ & Score-softmax weighting \\
  & Max image side (train) & 384\,px & Downsampled during fusion training only \\
\midrule
\multirow{3}{*}{\textbf{Hardware}}
  & GPU         & \shortstack[c]{NVIDIA H200\\ (80\,GB)} & GNN / fusion training and inference \\
  & CPU workers & 16 & pyIFD precomputation \\
  & RAM         & 128\,GB & --- \\
\bottomrule
\end{tabularx}
\end{table*}
\subsection{Computational Cost Analysis}

Table~\ref{tab:appendix_compute_cost} reports the resource comparison that complements RQ4. The table shows a clear trade-off between modularity and wall-clock efficiency. The deep learning baselines are substantially faster at inference, which is expected because they execute a single learned model on GPU. In contrast, FRAME evaluates multiple handcrafted forensic modules, many of which are CPU-bound. This design makes the overall runtime much higher than the deep learning baselines. However, this cost should be interpreted together with the modular design of the method. The runtime is dominated by handcrafted module execution rather than by the selector or the learned fusion module.

At the same time, the learned part of FRAME is lightweight. It uses much less GPU memory than the deep learning baselines and maintains a very small learned parameter footprint. The storage cost is also moderate. Compared with uniform-all pyIFD, FRAME increases runtime only modestly while providing much stronger accuracy. This result suggests that the main cost comes from the shared forensic module pool rather than from the learned selection and fusion mechanism. Overall, the numbers in Table~\ref{tab:appendix_compute_cost} show that FRAME trades inference speed for modularity and stronger localization, while keeping the learned component small.

\begin{table*}[t]
\centering
\caption{Computation cost comparison measured on CASIA v1. FRAME runs pyIFD modules on CPU and the GNN selector on GPU. Wall time is dominated by CPU-bound module execution. \(\dagger\): no learned parameters.}
\label{tab:appendix_compute_cost}
\small
\setlength{\tabcolsep}{4pt}
\begin{tabular}{lcccccc}
\toprule
Method & Time (s/img) & GPU Mem (MB) & CPU Mem (MB) & Params (M) & Model Size (MB) & Storage (MB/img) \\
\midrule
Best single pyIFD  & $1.12$  & $0$       & $184$  & $0^\dagger$ & $0^\dagger$ & $0.18$ \\
Uniform-all pyIFD  & $8.42$  & $0$       & $312$  & $0^\dagger$ & $0^\dagger$ & $1.46$ \\
TruFor             & $0.34$  & $2{,}847$ & $891$  & $149.6$     & $599.2$     & $0.12$ \\
MMFusion           & $0.28$  & $3{,}124$ & $746$  & $83.2$      & $332.8$     & $0.09$ \\
ManTraNet          & $0.19$  & $1{,}892$ & $634$  & $38.4$      & $153.6$     & $0.08$ \\
CAT-Net            & $0.41$  & $2{,}156$ & $712$  & $56.7$      & $226.8$     & $0.10$ \\
FRAME (ours)       & $9.23$  & $847$     & $428$  & $0.04$      & $0.2$       & $0.24$ \\
\bottomrule
\end{tabular}
\end{table*}

\subsection{Additional Qualitative Results}

Figure~\ref{fig:appendix_qualitative} provides additional qualitative comparisons between representative deep learning baselines, representative handcrafted cues, and FRAME. Several examples show that DCT and other single handcrafted cues can respond strongly but also produce noisy background activation. TruFor and MMFusion are generally more spatially consistent than the single handcrafted cues, yet their responses may still miss fine boundaries or weaker manipulated regions. FRAME usually produces a more concentrated response in the manipulated area while suppressing part of the spurious background activation that appears in the handcrafted maps.

The figure also shows that the problem remains difficult in some low-texture or small-object cases. In such examples, all methods weaken, and the advantage of FRAME becomes smaller. Still, the qualitative comparisons support the main conclusion of the paper: adaptive selection and learned fusion help combine complementary forensic signals into a cleaner final localization map.

\begin{figure*}[t]
    \centering
    \newcommand{\imgwidth}{0.155\textwidth}
    \begin{tabular}{@{}c@{\hspace{2mm}}c@{\hspace{2mm}}c@{\hspace{2mm}}c@{\hspace{2mm}}c@{\hspace{2mm}}c@{}}
        \small Input Image & \small TruFor & \small MMFusion & \small DCT & \small GHOST & \small Our Method \\
        \includegraphics[width=\imgwidth]{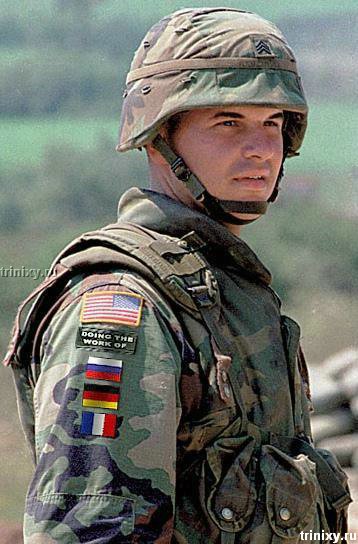} &
        \includegraphics[width=\imgwidth]{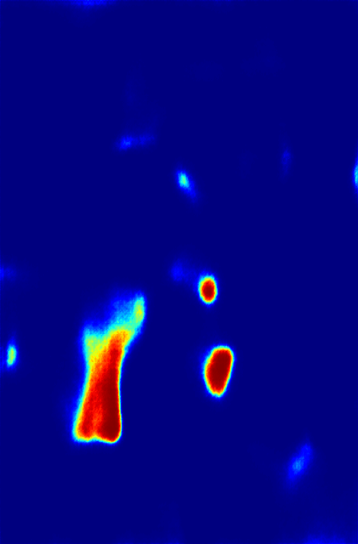} &
        \includegraphics[width=\imgwidth]{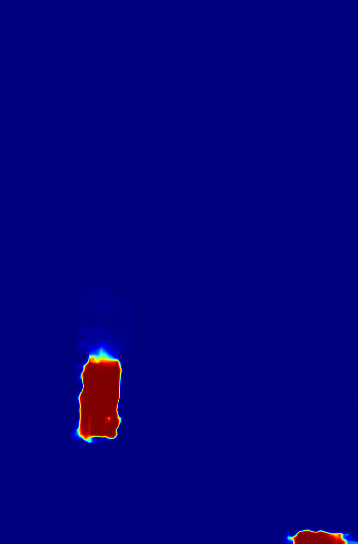} &
        \includegraphics[width=\imgwidth]{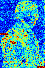} &
        \includegraphics[width=\imgwidth]{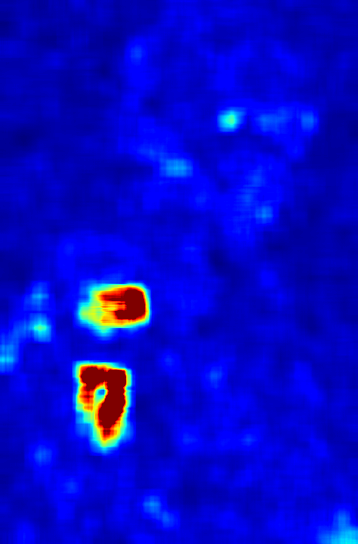} &
        \includegraphics[width=\imgwidth]{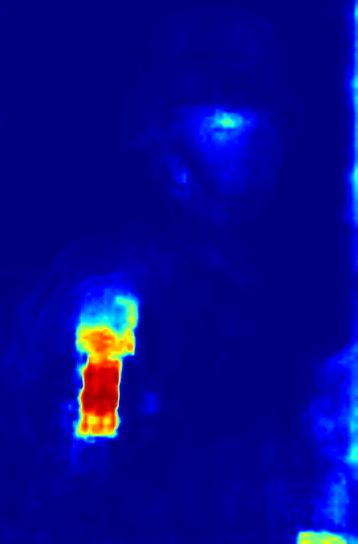} \\
        \includegraphics[width=\imgwidth]{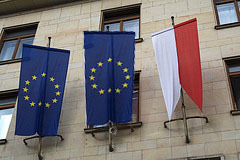} &
        \includegraphics[width=\imgwidth]{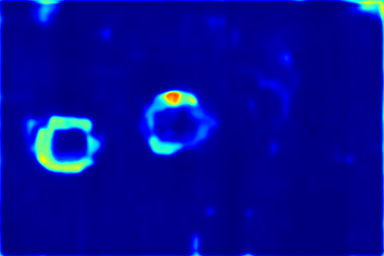} &
        \includegraphics[width=\imgwidth]{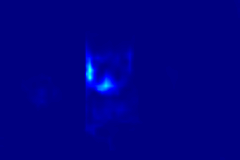} &
        \includegraphics[width=\imgwidth]{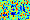} &
        \includegraphics[width=\imgwidth]{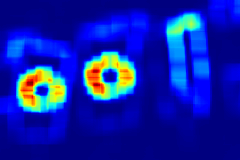} &
        \includegraphics[width=\imgwidth]{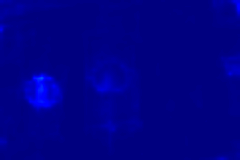} \\
        \includegraphics[width=\imgwidth]{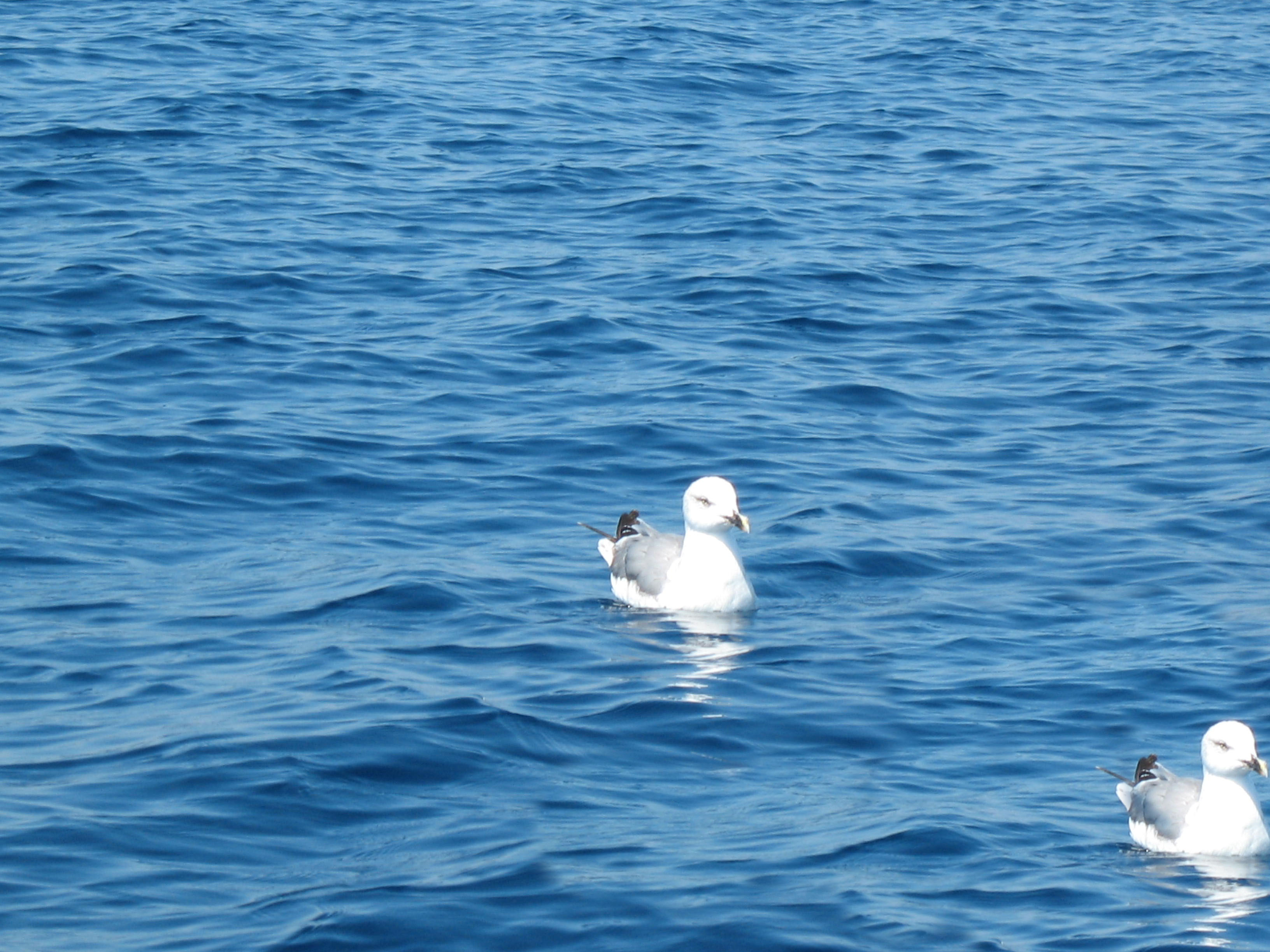} &
        \includegraphics[width=\imgwidth]{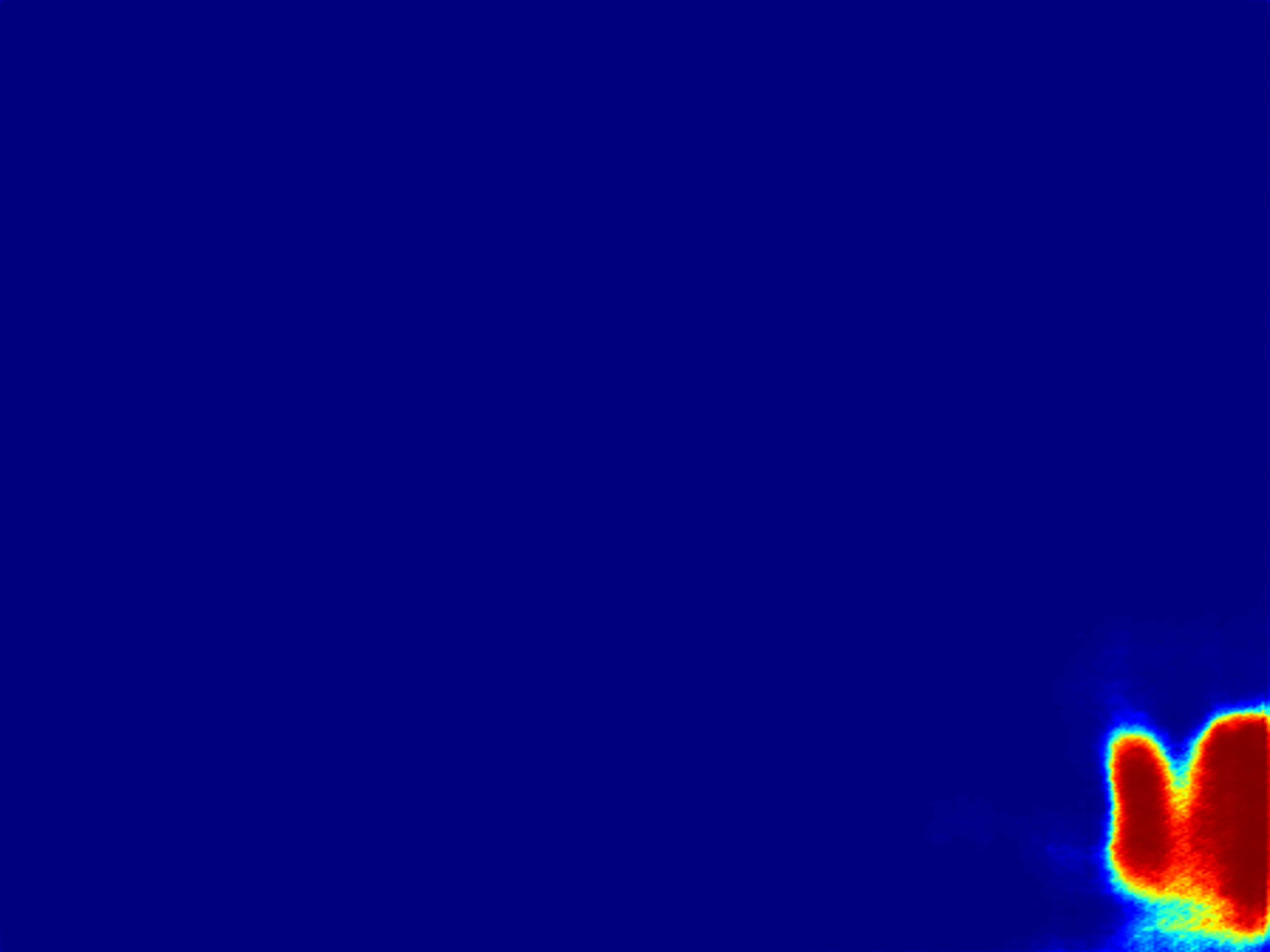} &
        \includegraphics[width=\imgwidth]{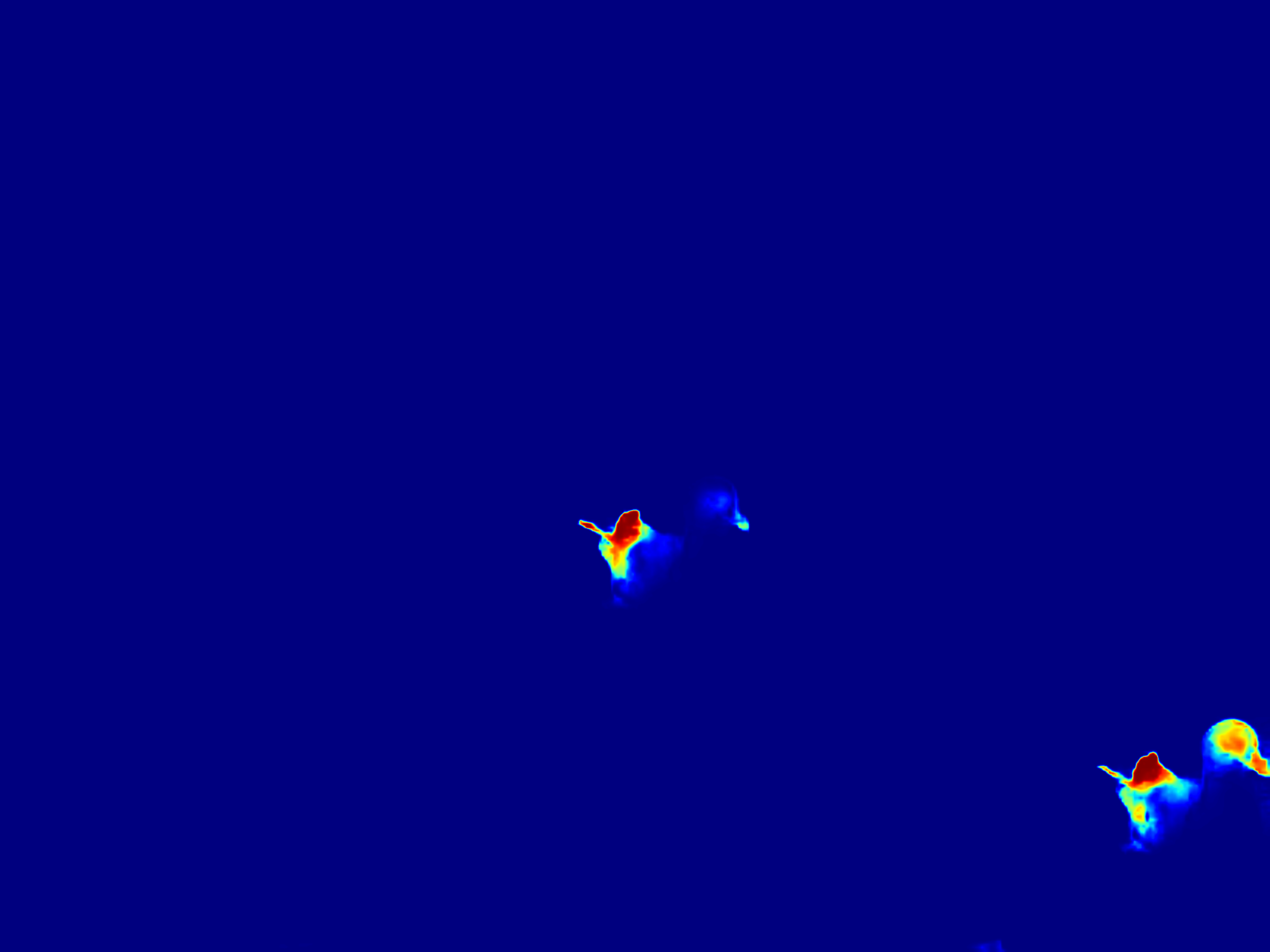} &
        \includegraphics[width=\imgwidth]{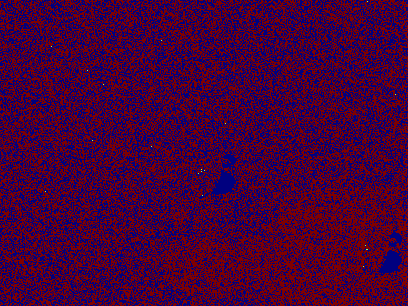} &
        \includegraphics[width=\imgwidth]{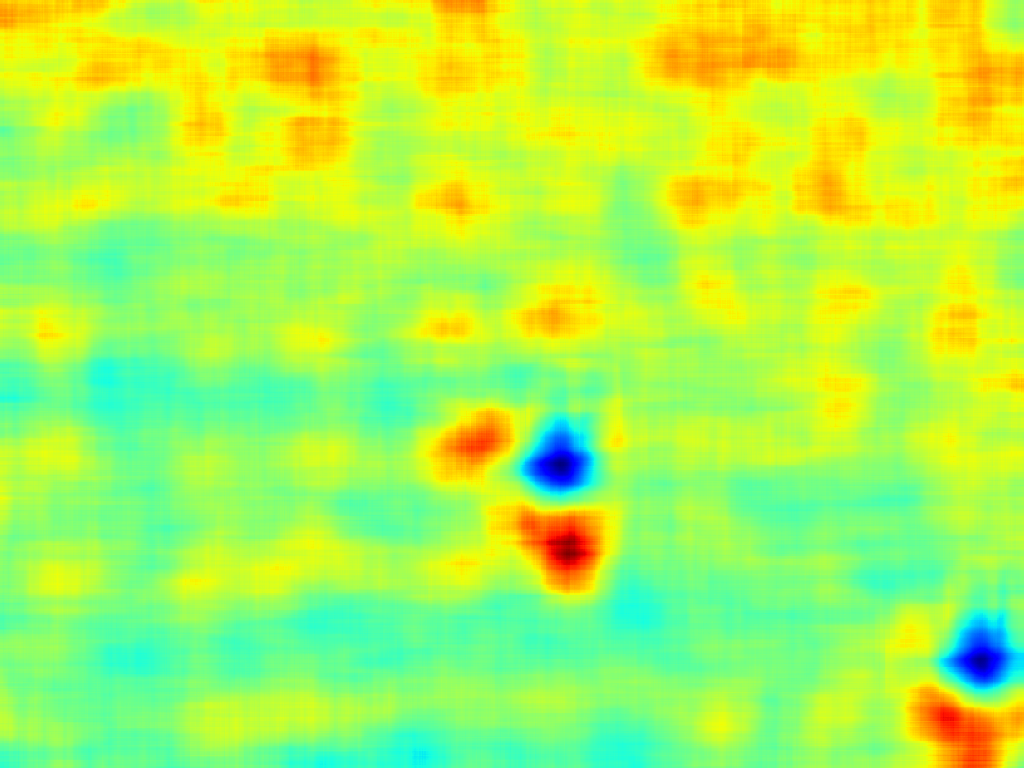} &
        \includegraphics[width=\imgwidth]{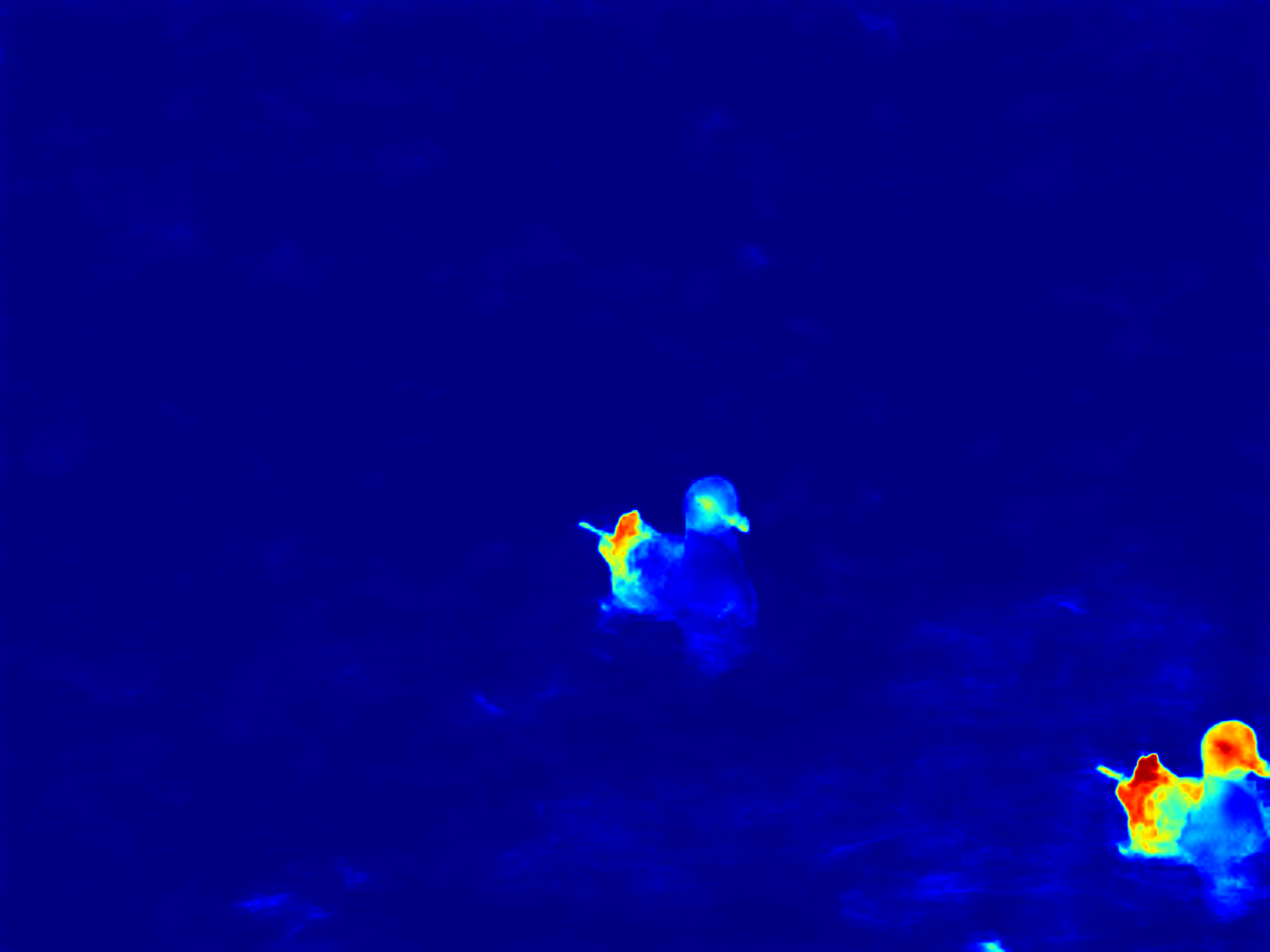} \\
        \includegraphics[width=\imgwidth]{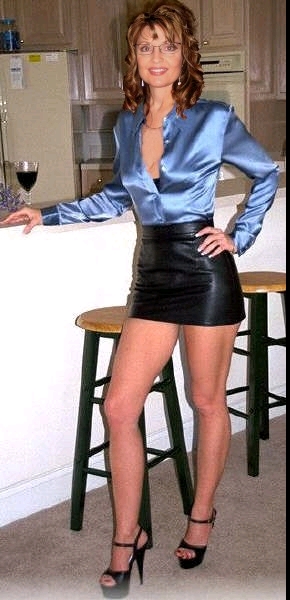} &
        \includegraphics[width=\imgwidth]{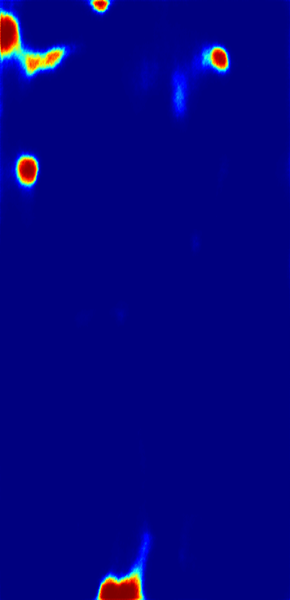} &
        \includegraphics[width=\imgwidth]{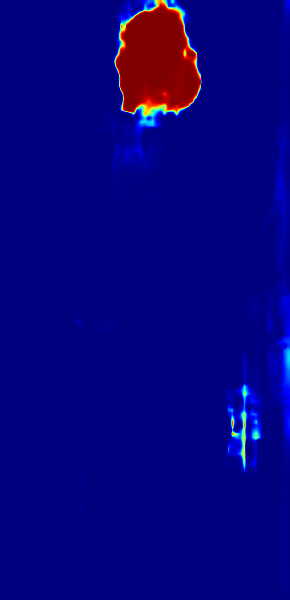} &
        \includegraphics[width=\imgwidth]{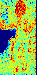} &
        \includegraphics[width=\imgwidth]{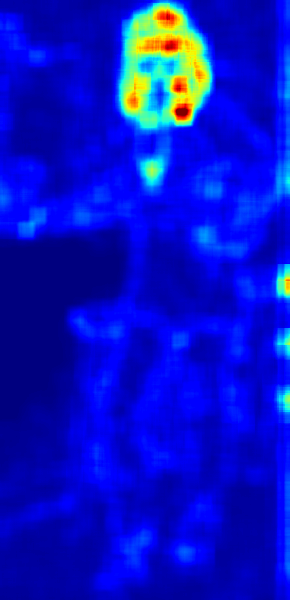} &
        \includegraphics[width=\imgwidth]{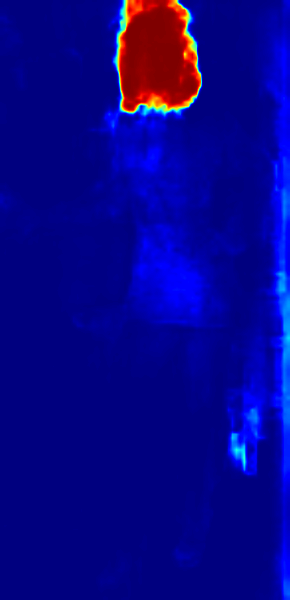} \\
        \includegraphics[width=\imgwidth]{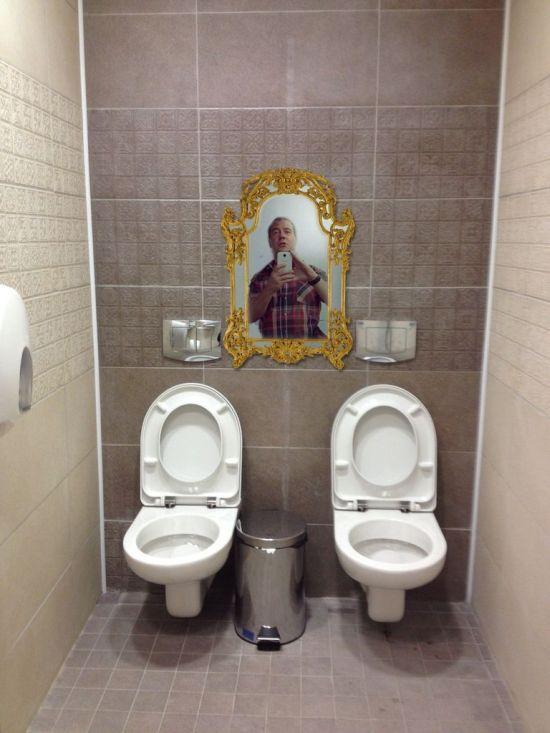} &
        \includegraphics[width=\imgwidth]{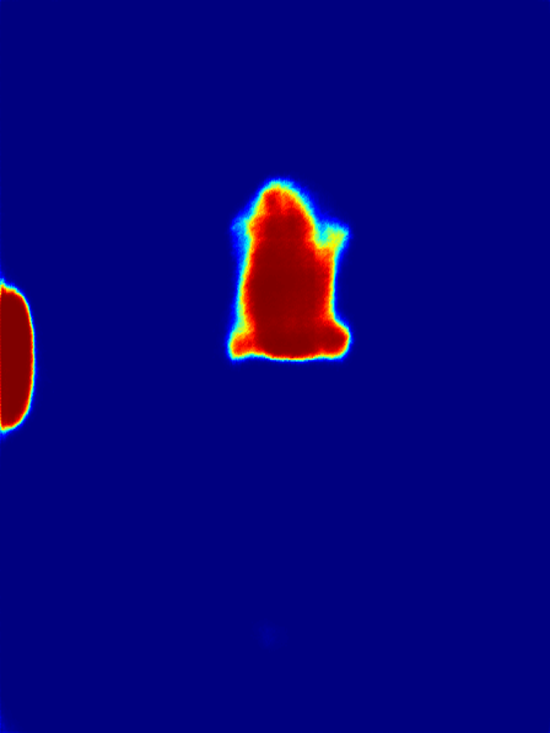} &
        \includegraphics[width=\imgwidth]{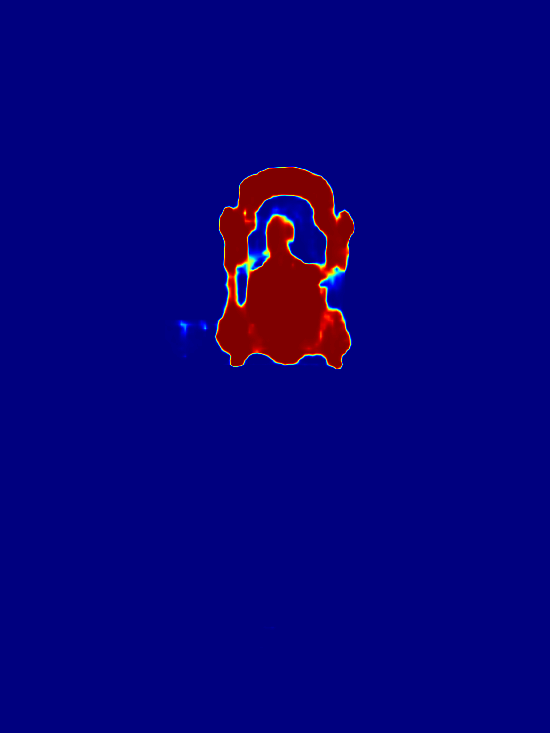} &
        \includegraphics[width=\imgwidth]{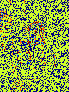} &
        \includegraphics[width=\imgwidth]{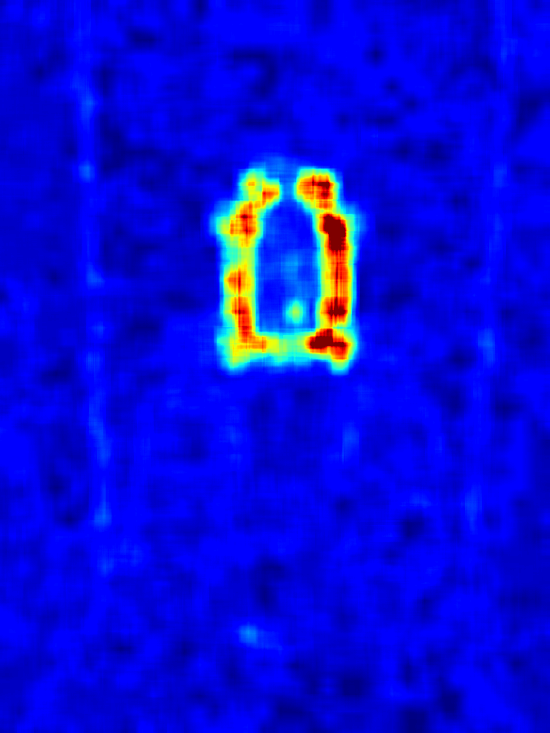} &
        \includegraphics[width=\imgwidth]{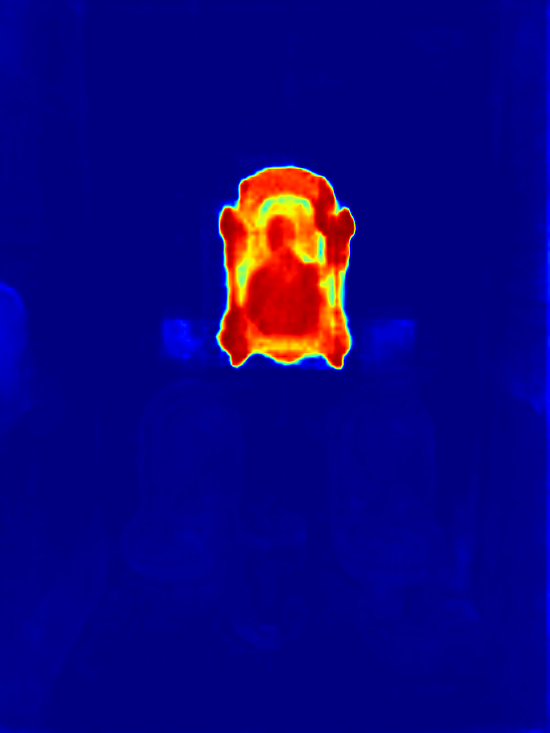} \\
    \end{tabular}
    \caption{Additional qualitative comparison of manipulation localization on five examples. Columns from left to right show the input image, TruFor, MMFusion, DCT, GHOST, and FRAME. The examples illustrate how FRAME combines complementary forensic signals and often produces a cleaner response around the manipulated region.}
    \label{fig:appendix_qualitative}
\end{figure*}

\end{document}